\documentclass{article}

\usepackage{caption}
\usepackage{subcaption}
\usepackage{PRIMEarxiv}
\usepackage{natbib}
\usepackage[utf8]{inputenc} % allow utf-8 input
\usepackage[T1]{fontenc}    % use 8-bit T1 fonts
\usepackage{hyperref}       % hyperlinks
\usepackage{url}            % simple URL typesetting
\usepackage{booktabs}       % professional-quality tables
\usepackage{amsfonts}       % blackboard math symbols
\usepackage{nicefrac}       % compact symbols for 1/2, etc.
\usepackage{microtype}      % microtypography
\usepackage{lipsum}
\usepackage{fancyhdr}       % header
\usepackage{graphicx}       % graphics
\graphicspath{{media/}}     % organize your images and other figures under media/ folder

\usepackage{amsmath}
\usepackage{amssymb}
\usepackage{mathtools}
\usepackage{amsthm}

\theoremstyle{plain}
\newtheorem{theorem}{Theorem}[section]

\newtheorem{lemma}[theorem]{Lemma}

\theoremstyle{definition}

\theoremstyle{remark}

\title{APEX: Probing Neural Networks via Activation Perturbation
}

\author{
  REN Tao \\
  Aalborg University \\
  Denmark\\
  \texttt{taoren@es.aau.dk} \\
   \And
  Xiaoyu Luo \\
  Aalborg University \\
  Denmark\\
  \texttt{xilu@cs.aau.dk} \\
  \And
  Qiongxiu Li \\
  Aalborg University \\
  Denmark\\
  \texttt{qili@es.aau.dk}
}
\makeatletter
\renewenvironment{abstract}
{
  \centerline{\large \bfseries \scshape Abstract}
  \vspace{0.5em}
  \begin{list}{}{\setlength{\leftmargin}{2em}\setlength{\rightmargin}{2em}}
  \item\relax
}
{
  \end{list}
  \vspace{0.5em}
}
\makeatother

\usepackage{enumitem}
\setlist[itemize]{leftmargin=1.2em}

\begin{document}

\maketitle
\vspace{1em}

\begin{abstract}
Prior work on probing neural networks primarily relies on input-space analysis or parameter perturbation, both of which face fundamental limitations in accessing structural information encoded in intermediate representations. We introduce \textbf{A}ctivation \textbf{P}erturbation for \textbf{EX}ploration (APEX), an inference-time probing paradigm that perturbs hidden activations while keeping both inputs and model parameters fixed. We theoretically show that activation perturbation induces a principled transition from sample-dependent to model-dependent behavior by suppressing input-specific signals and amplifying representation-level structure, and further establish that input perturbation corresponds to a constrained special case of this framework. Through representative case studies, we demonstrate the practical advantages of APEX. In the small-noise regime, APEX provides a lightweight and efficient measure of sample regularity that aligns with established metrics, while also distinguishing structured from randomly labeled models and revealing semantically coherent prediction transitions. In the large-noise regime, APEX exposes training-induced model-level biases, including a pronounced concentration of predictions on the target class in backdoored models. Overall, our results show that APEX offers an effective perspective for exploring, and understanding neural networks beyond what is accessible from input space alone.
\end{abstract}

\begin{figure}[h]
\centering
\includegraphics[width=0.99\linewidth]{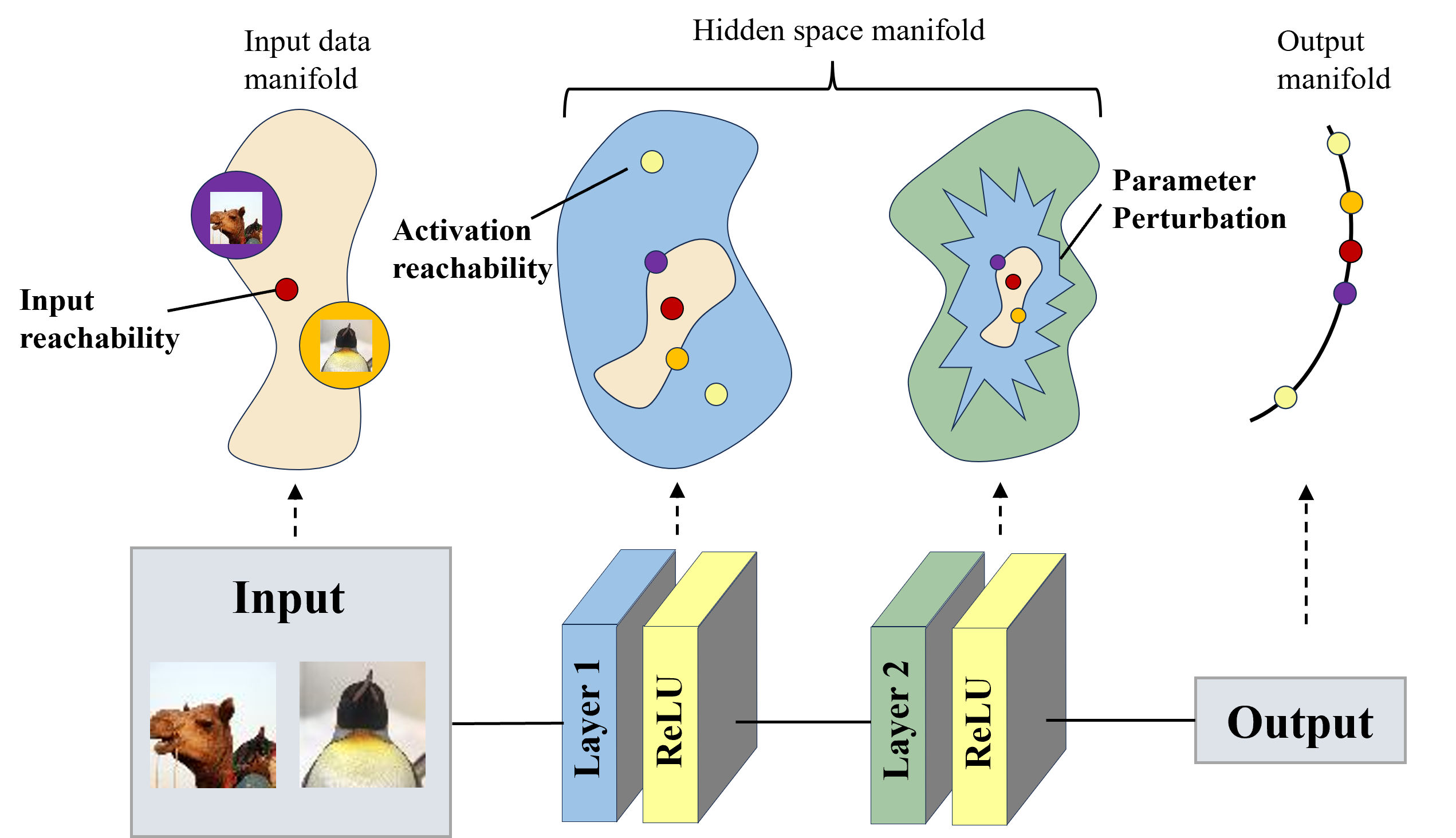}
\caption{
Conceptual illustration of probing mechanisms. Input perturbations are constrained by input-to-representation mapping, parameter perturbations modify the model itself, whereas \textbf{APEX} operates directly on hidden representations, enabling exploration beyond input reachability without modifying model parameters.}
\label{fig:hook}
\end{figure}

\section{Introduction}

Characterizing the information encoded in neural networks remains a central challenge in modern machine learning. Although training is driven by input--output supervision, a trained model ultimately operates through a hierarchy of hidden representations that mediate between data and prediction, encoding biases induced by the training data and regime~\cite{DeepRepresentationLearningonLong-TailedData,li2025trustworthymachinelearningmemorization,DataPoisoningAttacksintheTrainingPhase}.
As a result, phenomena such as generalization, memorization, robustness, and security vulnerabilities are fundamentally representation-level effects rather than purely input--output properties~\cite{li2018measuring, Internal-representation-tishby}. A central goal of model analysis is therefore to characterize how information is organized within this hidden representation space. 

A common approach to probing models is to analyze their responses to perturbations on inputs or parameters~\cite{perturbationbased}. Input perturbations have been used to study robustness and decision boundaries, while parameter perturbations are employed to assess model sensitivity~\cite{inputperturbation,decision_karimi, perturb-neuron-davis21a, Sharma2023InvestigatingWD}. However, input-space perturbations must propagate through successive nonlinear transformations and can only induce variations that lie within the image of the input-to-representation mapping~\cite{rankdiminishing}, rendering large portions of the internal representation space unreachable. Parameter perturbations alter the model itself, entangling the probing signal with changes to the object under study. Consequently, existing probing frameworks that operate through input or parameter perturbations may provide an incomplete view of the information encoded in internal representations.

To address this gap, we introduce Activation Perturbation for EXploration (APEX), an inference-time probing paradigm that perturbs hidden activations while keeping both inputs and parameters fixed. This design enables direct access to intermediate representations beyond the constraints imposed by input-space reachability. We show that input perturbation can be interpreted as a constrained special case of APEX, yielding a unified structural perspective on existing probing mechanisms. We further show that increasing activation noise induces a transition from sample-dependent to model-dependent behavior, revealing both local, instance-specific structure and global, model-level organization, which is analyzed in detail in Section~\ref{sec:activationperturbation}.  Figure~\ref{fig:hook} illustrates the conceptual differences among these probing paradigms.

% Motivated by this limitation, we propose to directly probe internal representations at intermediate layers. In this work, we introduce Activation Perturbation for EXploration (APEX), an inference-time probing paradigm that perturbs hidden activations while keeping inputs and parameters fixed, enabling direct exploration of internal representations beyond input-space reachability. 
% We show that input perturbation arises as a constrained special case of APEX, which admits a simple and principled structural interpretation.
% As formalized in Section~\ref{sec:activationperturbation}, increasing activation noise induces a transition from sample-dependent to model-dependent behavior, revealing both local sample-level structure and global model-level organization. Figure~\ref{fig:hook} provides an illustrative visualization to build intuition for this paradigm.

%Figure~\ref{fig:hook} provides a geometric illustration of the reachability limitation inherent to input-space probing: while input perturbations are confined to a low-dimensional input manifold, hidden representations progressively expand and reorganize across layers, forming a substantially larger and more expressive representation space, much of which is unreachable through input variations alone.
%Perturbing activations at intermediate layers therefore enables direct exploration of this internal space, bypassing input reachability constraints.

Consistent with this interpretation, APEX reveals two recurring empirical regimes that reflect how information is organized within learned representations.
In the small-noise regime, APEX provides a lightweight and efficient measure of sample regularity that aligns with established metrics, while also distinguishing models trained on structured versus randomly labeled data, and in controlled settings reveals prediction transitions that are consistent with semantic structure learned by the model. In the large-noise regime, predictions become input-independent and converge to a stationary, model-specific output distribution, exposing training-induced model-level biases, including a pronounced concentration of probability mass on the target class in backdoored models. Together, these observations indicate that APEX can expose representation-level structure in a more semantically aligned manner in our controlled experiments, whereas input- and parameter-level perturbations do not exhibit the same behavior under these settings. Our contributions are:
\begin{itemize}
    \item We provide evidence that perturbing hidden activations yields more semantically aligned probing behavior under controlled conditions, compared to input- or parameter-level perturbations.
    \item We demonstrate that activation perturbation reveals a unified two-regime behavior: under small noise, prediction stability is correlated with sample-level regularity, while under large noise, output distributions become input-agnostic and reflect model-level structure, including imprints of the training data.
    \item We show that the stationary output distribution induced by large activation noise encodes intrinsic model biases. In particular, backdoored models exhibit target-aligned concentration in this regime, whereas benign models produce more dispersed outputs, a distinction that is difficult to be revealed by input- or parameter-level perturbations.
\end{itemize}

\section{Related work}
\label{sec:relatedwork}

\paragraph{Sample Regularity and Memorization}
\label{sec:memandC}

Prior work has shown that not all training samples are treated equally by deep networks.
Long-tailed or atypical samples tend to be memorized \cite{li2025trustworthymachinelearningmemorization}, while representative samples generalize more robustly~\cite{feldman2020does}. 
Several metrics have been proposed to quantify this notion of \emph{sample regularity}, 
including memorization-based scores~\cite{whatneuralnetworkmem} and consistency-based measures~\cite{C-score}. 
Despite differences in formulation, these approaches consistently indicate that samples occupy decision regions of varying stability and geometric robustness. Additional discussion is provided in Appendix~\ref{app:longtailed}.

\paragraph{Backdoor Attacks}
\label{sec:BD}

Backdoor attacks introduce a small fraction of poisoned training samples that cause the trained model to associate a trigger pattern with a target class~\cite{ASurvey}. 
While the model behaves normally on clean inputs, the presence of the trigger induces a systematic and input-agnostic prediction shift.
Although backdoor mechanisms vary across attack designs~\cite{badnets,IAD,LIRA}, 
their common effect is to embed a persistent structural bias toward the target class within the model’s internal representations.

\paragraph{Perturbation-Based Probing.}
A common approach to probing neural networks is through input perturbations,
which relate robustness to local decision geometry.
For example, ~\citet{fawzi-geometry} analyze model responses to random and structured perturbations
constrained to low-dimensional subspaces, showing that the minimal perturbation norm
reflects the distance and curvature of the decision boundary. ~\citet{decision_karimi} further study local decision structure by generating perturbations near class boundaries and examining the behavior of borderline samples. Another line of work probes models via parameter perturbations. ~\citet{Sharma2023InvestigatingWD} quantify sensitivity by measuring expected changes
in activations and gradients under random weight perturbations,
and use this metric to analyze model stability and guide architecture design.

Recent work has shown that injecting noise into hidden activations at inference time can degrade safety guardrails in large language models~\cite{shahani2025noiseinjectionsystemicallydegrades}, indicating that activation-level perturbations can meaningfully affect model behavior. This observation highlights the potential of activation perturbation as a probe of internal model structure.

% \section{Proposed Activation Perturbation}
% \label{sec:activationperturbation}

% Consider an $L$-layer neural network 
% $f_\theta:\mathbb{R}^d \to \mathbb{R}^c$ 

% \[
% z_\ell = W_\ell a_{\ell-1} + b_\ell, \qquad 
% a_\ell = \phi(z_\ell),
% \]

% where $W_\ell$, $b_\ell$, and $\phi(\cdot)$ denote the layer's weight matrix, bias, and activation function respectively.

% To probe how signals propagate through the nonlinear layers of a trained model, 
% we perform repeated forward passes while injecting Gaussian noise after each activation:
% \[
% \tilde{a}_\ell 
% = \phi(z_\ell) + \sigma\,\xi_\ell, 
% \qquad
% \xi_\ell \sim \mathcal{N}(0,I).
% \]
% where \( \xi_\ell \sim \mathcal{N}(0, I) \) are Gaussian noise vector with i.i.d. entries, and \( \sigma \) is a positive noise strength scaling factor. The input $x$, the parameters $\theta$, and the pre-activations $z_\ell$ remain unchanged across runs. All noises are independent across layers and repetitions.

\section{Activation Perturbation Framework}
\label{sec:activationperturbation}

We now proceed to explain details of the proposed \textbf{APEX}, a simple yet effective framework for analyzing neural network behavior by introducing controlled stochastic perturbations into intermediate activations and observing the resulting output variations. The core idea of APEX is to probe the internal sensitivity and response patterns of a trained model without modifying its parameters or input data, thereby enabling direct exploration of internal representation space beyond input reachability. Despite its conceptual simplicity, this approach enables systematic characterization of prediction stability and reveals abnormal behaviors that are difficult to capture through standard approaches such as input and parameter perturbations.

Formally, consider an $L$-layer neural network 
$f_\theta:\mathbb{R}^d \to \mathbb{R}^c$ with layer-wise transformations
\[
z_\ell = W_\ell a_{\ell-1} + b_\ell, \qquad 
a_\ell = \phi(z_\ell),
\]
where $W_\ell$ and $b_\ell$ are the weight matrix and bias term of layer $\ell$, 
and $\phi(\cdot)$ is the activation function, so that $z_\ell$ and $a_\ell=\phi(z_\ell)$
denote the pre- and post-activation representations, respectively. To probe the structure and sensitivity of internal representations in a trained model, we perform repeated forward passes while injecting Gaussian noise after each activation:
\[
\tilde{a}_\ell 
= \phi(z_\ell) + \sigma\,\xi_\ell, 
\qquad
\xi_\ell \sim \mathcal{N}(0,I).
\]
Here, $\xi_\ell$ denotes a Gaussian noise vector with i.i.d.\ entries, and $\sigma$ is a positive scaling factor controlling the perturbation magnitude. The input $x$, model parameters $\theta$, and pre-activations $z_\ell$ remain fixed across runs. Noise realizations are independently drawn across layers and repetitions.

\subsection{Stochastic Output Distribution Estimation}

For a fixed input $x$ and class set $\mathcal{Y}$, we characterize the model’s predictive behavior under APEX by repeatedly performing stochastic forward passes with injected noise. Specifically, we conduct $T$ independent perturbation trials and record the top-1 predicted class at each run. Let $k_t^{\ast}$ denote the predicted class at trial $t$. We estimate the empirical probability of predicting class $k$ under perturbation magnitude $\sigma$ as
\[
\hat{P}_x(k;\sigma)
= \frac{1}{T}\sum_{t=1}^{T}
\mathbf{1}\!\left(k_t^{\ast}=k\right),
\qquad k\in\mathcal{Y}.
\]
This sampling-based estimation procedure approximates the output distribution induced by stochastic perturbation.

% \subsection{Monte Carlo Simulation.} For a fixed input $x$ and a set of classes $\mathcal{Y}$, we characterize the model’s perturbed behavior 
% by running $T$ noisy forward passes and recording the top–1 prediction 
% on each run.  
% Let $k_t^{\ast}$ denote the predicted class at run $t$.  
% The empirical probability of predicting class $k$ under perturbation~$\sigma$ is
% \[
% \hat{P}_x(k;\sigma)
% = \frac{1}{T}\sum_{t=1}^{T} 
% \mathbf{1}\!\left(k_t^{\ast}=k\right),
% \qquad k\in\mathcal{Y}.
% \]
% These empirical output distributions summarize how often each class is selected across Monte Carlo runs. 

\subsection{Theoretical Characterization of APEX}
\label{sec:mathInterpretation}

Activation perturbation does not directly modify the input or the model parameters, yet it reveals both sample-level regularity and model-level structure. Mathematically, it gives rise to a decomposition of the forward signal. 

\begin{theorem}[Decomposition of activation perturbation]
\label{thm:activation-decomposition}
Assume inputs lie in a bounded set $\mathcal{X}=\{x:\|x\|\le R\}$ under an induced norm $\|\cdot\|$.
Then for each layer $\ell\in\{1,\dots,L\}$ there exist a noise-dependent vector $v_\ell$,
independent of $x$, and a residual $r_\ell(x;\sigma)$ such that for all $x\in\mathcal{X}$ and $\sigma>0$,
\begin{equation}
\label{eq:activation-decomp}
\tilde{a}_\ell(x;\sigma)=\sigma v_\ell + r_\ell(x;\sigma),
\qquad
\|r_\ell(x;\sigma)\|\le B_\ell(R;W,b),
\end{equation}
where $B_\ell(R;W,b)$ is independent of $\sigma$.
A constructive definition of $(v_\ell,r_\ell)$ and the bound are provided in Appendix~\ref{app:DecompositionProP}
and Appendix~\ref{app:boundness}.
\end{theorem}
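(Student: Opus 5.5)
The plan is to prove the decomposition by induction on the layer index $\ell$, under the standing assumption (implicit in the setting) that $\phi$ is $1$-Lipschitz with $\phi(0)=0$, as for ReLU, or more generally $L_\phi$-Lipschitz with $|\phi(0)|$ finite. The construction of $(v_\ell, r_\ell)$ is simply
\[
v_\ell := \xi_\ell, \qquad r_\ell(x;\sigma) := \phi\bigl(z_\ell(x;\sigma)\bigr),
\]
where $z_\ell$ is the pre-activation computed along the perturbed forward pass. Then $\tilde a_\ell = \sigma v_\ell + r_\ell$ holds by definition, and $v_\ell$ is independent of $x$. All of the content therefore lies in bounding $\|r_\ell\|$ uniformly in $\sigma$.

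Here a subtlety arises. If $z_\ell = W_\ell \tilde a_{\ell-1} + b_\ell$ depends on the noisy $\tilde a_{\ell-1}$, then $\|\phi(z_\ell)\|$ grows like $\sigma\|W_\ell\xi_{\ell-1}\|$, and no $\sigma$-free bound exists. I would resolve this in one of two ways.

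The first reading follows the framework literally: the pre-activations $z_\ell$ are stated to remain fixed across runs. In that case $z_\ell(x) = W_\ell a_{\ell-1}(x) + b_\ell$ is computed on the clean pass. Induction on the clean activations then gives
\[
\|a_\ell(x)\| \le L_\phi\bigl(\|W_\ell\|\,\|a_{\ell-1}(x)\| + \|b_\ell\|\bigr) + \|\phi(0)\|, \qquad a_0 = x,\ \|x\|\le R.
\]
This yields
\[
B_\ell(R;W,b) = L_\phi^\ell \prod_{j\le \ell}\|W_j\|\,R + \sum_{j\le \ell}(\cdots),
\]
an explicit function of $R$, the weights and the biases alone. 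This is the proof I would write.

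The second reading is the more robust alternative, in which the noise propagates. Here I would absorb the propagated noise into $v_\ell$, which is what "noise-dependent" permits. For ReLU, positive homogeneity gives
\[
\phi(\sigma u + w) = \sigma\,\phi\bigl(u + w/\sigma\bigr),
\]
and Lipschitzness gives
\[
\bigl\|\phi(\sigma u + w) - \sigma\phi(u)\bigr\| \le \|w\|.
\]
I would then define recursively $v_\ell := \phi(W_\ell v_{\ell-1}) + \xi_\ell$ with $v_0 = 0$, which depends only on the noise. The residual satisfies
\[
\|r_\ell\| \le \|W_\ell\|\,\|r_{\ell-1}\| + \|b_\ell\|.
\]
The key step is
\[
\tilde a_\ell = \phi\bigl(\sigma W_\ell v_{\ell-1} + W_\ell r_{\ell-1} + b_\ell\bigr) + \sigma\xi_\ell,
\]
combined with the homogeneity/Lipschitz estimate applied with $u = W_\ell v_{\ell-1}$ and $w = W_\ell r_{\ell-1} + b_\ell$. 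Unrolling gives
\[
B_\ell = \prod_{j\le\ell}\|W_j\|\,R + \sum_{i\le\ell}\prod_{i<j\le\ell}\|W_j\|\,\|b_i\|,
\]
which is independent of $\sigma$. The base case is $r_0 = x$, with $\|r_0\| \le R$.

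The main obstacle is this noise-propagation case. The inequality
\[
\|\phi(\sigma u + w) - \sigma\phi(u)\| \le \|w\|
\]
requires positive homogeneity, which ReLU and leaky ReLU have. For general Lipschitz activations I would instead set $v_\ell$ to satisfy $\sigma v_\ell = \phi(\sigma W_\ell v_{\ell-1}) + \sigma\xi_\ell$; this makes $v_\ell$ depend on $\sigma$ but still not on $x$. The same Lipschitz bound then holds, giving a $\sigma$-independent $B_\ell$ for any $L_\phi$-Lipschitz $\phi$, with the factors $L_\phi$ inserted. I would state this general version and remark that for ReLU, $v_\ell$ is genuinely $\sigma$-free.
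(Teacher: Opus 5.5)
Your second, noise-propagating argument is the paper's proof almost line for line. Appendix~\ref{app:DecompositionProP} also takes $\tilde a_\ell=\mathrm{ReLU}(W_\ell\tilde a_{\ell-1}+b_\ell)+\sigma\xi_\ell$ and starts from $v_1=\xi_1$, $r_1=\mathrm{ReLU}(W_1x+b_1)$, which your choice $v_0=0$, $r_0=x$ reproduces. It sets $v_\ell=\mathrm{ReLU}(W_\ell v_{\ell-1})+\xi_\ell$ and applies your homogeneity-plus-Lipschitz estimate (Lemma~\ref{lem:relu-residual} with $a=W_\ell v_{\ell-1}$, $d=W_\ell r_{\ell-1}+b_\ell$). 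This gives $\|r_\ell\|\le\|W_\ell\|\,\|r_{\ell-1}\|+\|b_\ell\|$ and the same unrolled $B_\ell$. That is the proof you should write.

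\textbf{Drop the frozen-pre-activation reading.} You label it ``the proof I would write,'' and the main text's remark that $z_\ell$ stays fixed does invite it. But it is not the model the paper analyzes, and under it the theorem is trivial. You get $v_\ell=\xi_\ell$, and noise injected below layer $L$ never reaches the logits. The large-noise analysis would then reduce to $\arg\max_k(U\xi_L)_k$, which involves no hidden-layer weights. That contradicts the paper's use of the stationary distribution as a representation-level signature, and it makes the single-layer injection experiments meaningless.

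\textbf{The general-Lipschitz extension is nearly vacuous.} With $\sigma$-dependent $v_\ell$ the statement is true, but you could simply set $\sigma v_\ell:=\tilde a_\ell(0;\sigma)$ and bound the residual by $\prod_{j\le\ell}L_\phi\|W_j\|\,R$. It also does not support Appendix~\ref{app:GPD}. There $v_L$ must be a fixed noise draw, so that the margin $\delta$ of $Uv_L$, and hence the threshold $2C/\delta$, does not move with $\sigma$. Positive homogeneity (ReLU, leaky ReLU) is exactly what makes a $\sigma$-free $v_\ell$ possible, which is why the paper states its argument for ReLU.
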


Applying the decomposition to the final layer yields a corresponding separation at the logits. For $\sigma>0$, Let $\tilde{a}_L(x;\sigma)$ denote the final hidden activation and
$s(x;\sigma)=U\tilde{a}_L(x;\sigma)+c$ the logits. Using
$\tilde{a}_L(x;\sigma)=\sigma v_L+r_L(x;\sigma)$, we obtain
\[
s(x;\sigma)=\sigma\,U v_L + \underbrace{(U r_L(x;\sigma)+c)}_{=:e(x;\sigma)}.
\]
Thus the predicted class $k^\ast$ satisfies
\[
\arg\max_k s_k(x;\sigma)
=\arg\max_k\!\left((Uv_L)_k + \frac{e_k(x;\sigma)}{\sigma}\right).
\]
Since $\|e(x;\sigma)\|_\infty \le \|U\|_\infty B_L(R;W,b)+\|c\|_\infty$ uniformly over
$x\in\mathcal{X}$ and $\sigma>0$ (Appendix~\ref{app:boundness}),
the input-dependent term $e(x;\sigma)/\sigma$ vanishes as $\sigma$ grows.
Consequently, the prediction becomes increasingly governed (in distribution) by the
model-induced random logits $Uv_L$ rather than the specific input $x$.
More precisely, conditioning on a fixed noise draw, once $\sigma$ exceeds a margin-based threshold,
the predicted label becomes independent of $x$; under noise resampling at inference time,
this implies that the prediction distribution converges to a model-specific limit
(Appendix~\ref{app:GPD}).

When $\sigma = 0$, the network reduces to its standard deterministic form and the output is entirely
determined by the unperturbed activations. For $\sigma > 0$, the decomposition induces two regimes:

\textbf{Small-noise regime:} When $\sigma$ is small, the residual term dominates, and the perturbed
network behaves similarly to the original deterministic model; its output remains sensitive to the
input sample.

\textbf{Large-noise regime:} When $\sigma$ is large, the $\sigma v_\ell$ component dominates and the
prediction becomes increasingly insensitive to the input (in distribution). Appendix~\ref{app:GPD}
formalizes the corresponding stationary, model-specific prediction distribution.

\paragraph{Input perturbation as a special case.}
Input samples play a central role in shaping the geometry and decision boundaries of neural networks,
reflecting a close connection between data distributions and learned representations
\cite{decision_karimi, fawzi-geometry, inputperturbation}. From a functional perspective,
input perturbation can be viewed as a special case of activation perturbation.
Perturbing the input $x$ by $\varepsilon$ induces a corresponding perturbation at any intermediate layer $\ell$,
\[
a_\ell(x+\varepsilon)=a_\ell(x)+\Delta_\ell(x,\varepsilon),
\]
where $
\Delta_\ell(x,\varepsilon):=a_\ell(x+\varepsilon)-a_\ell(x),$ is entirely determined by the prefix network up to layer $\ell$.
Thus, input perturbation amounts to injecting a \emph{constrained} perturbation into the activation space,
followed by the same suffix network. Taking $\ell=0$ recovers the input layer itself, making input noise a
degenerate instance of activation noise.

For small $\varepsilon$, a first-order expansion yields
$\Delta_\ell(x,\varepsilon)\approx J_{a_\ell}(x)\varepsilon$,
where $J_{a_\ell}(x)$ denotes the Jacobian of the prefix network \cite{novak2018sensitivity, LIUjacobian}.
Consequently, input perturbations can only induce activation changes lying in the image of this Jacobian,
which typically forms a highly constrained subset of the representation space \cite{rankdiminishing}.

Together, this analysis explains why APEX separates sample-dependent variability from model-level structure,
and why probing at the activation level enables access to representation properties that are fundamentally
inaccessible from the input space.

\section{Representation Structure under Small Activation Noise}
\label{sec:phe1}

\begin{figure}[h!]  % 'h!'表示尽可能在当前位置插入
\centering  % 图片居中
\includegraphics[width=\linewidth]{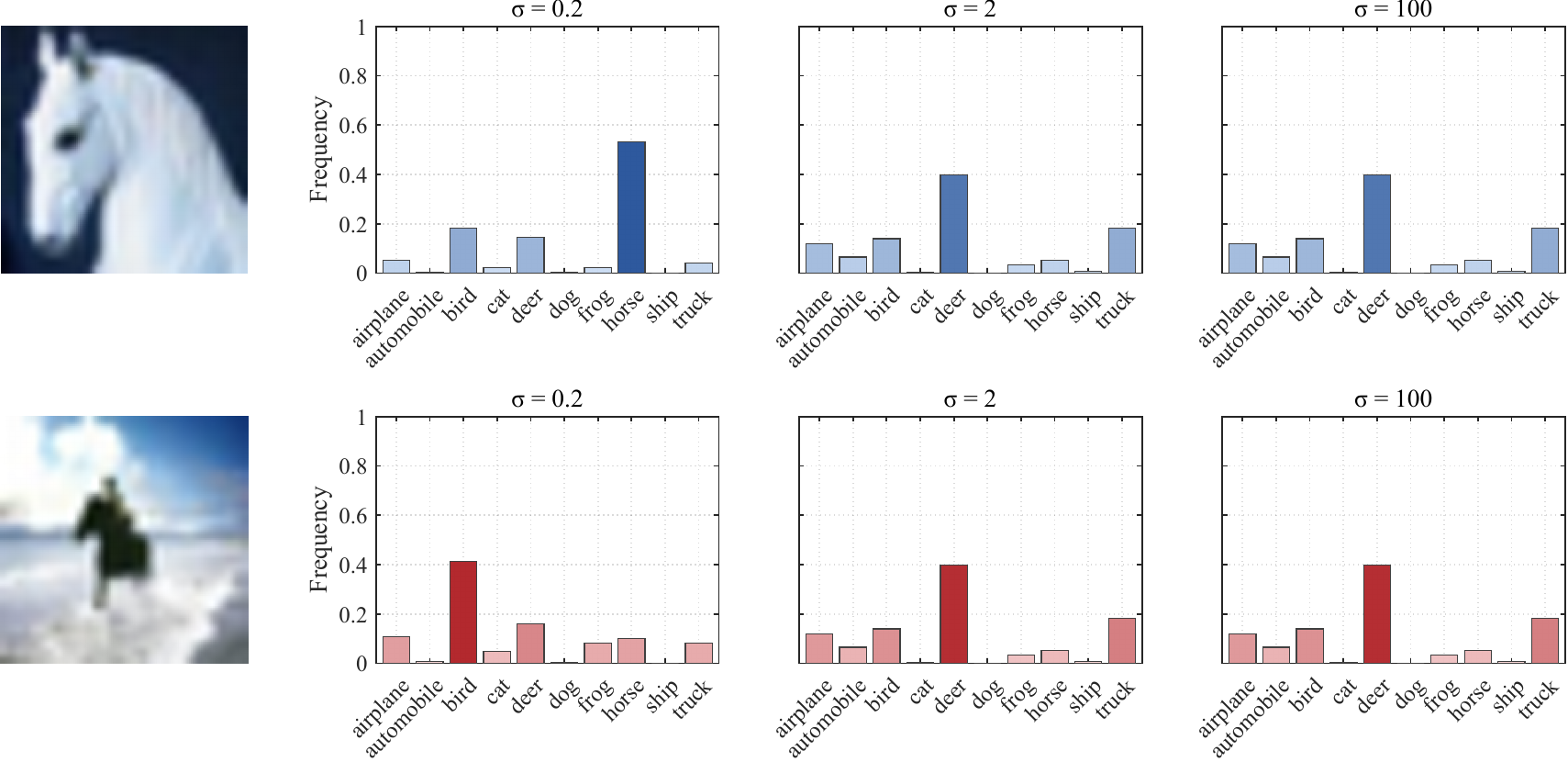}  

\caption{Real examples of the output distributions obtained from two ``horse'' samples.}  % 图片标题
\label{fig:OD-example}  % 图片标签，用于交叉引用
\end{figure}

In this section, we study the behavior of APEX in the small-noise regime.
Our goal is to examine whether, when sample-dependent structure is preserved,
APEX probes internal representations in a manner that is consistent with
the semantic structure learned by the model in controlled settings, and whether this behavior differs from that observed under input- or parameter-level perturbations.
 Detailed experimental setup for following sections is provided in
Appendix~\ref{app:trainstats}.

We begin by illustrating this behavior through a concrete example. Figure~\ref{fig:OD-example} shows the output distributions obtained from repeated forward passes
of a trained ResNet-18 model on two CIFAR-10 samples under small activation noise. For a typical ``horse'' image, the predicted class remains dominant across Monte Carlo runs even at $\sigma=0.2$.
In contrast, an atypical ``horse'' exhibits early prediction shifts toward semantically related classes such as ``bird'' or ``airplane''. These transitions are not arbitrary, but follow semantic proximity in the learned representation space.

\begin{figure}[h!]  % 'h!'表示尽可能在当前位置插入
\centering  % 图片居中
\includegraphics[width=0.7\linewidth]{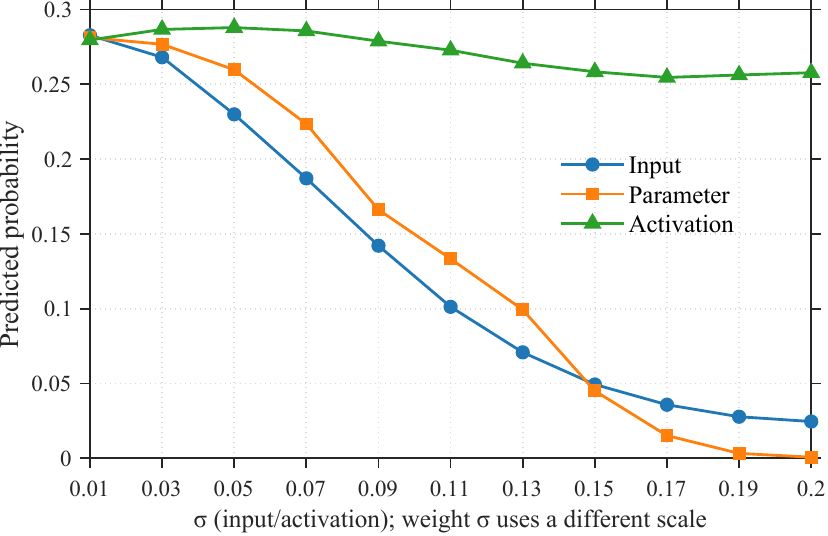}  
\caption{Semantic alignment under small noise.
Predicted probability of the reassigned class in a controlled CIFAR-10 setup
where two classes share the same input distribution.
Only activation perturbation induces a monotonic transfer between the two classes,
while input and parameter perturbations do not. (The noise magnitude for weight perturbation is rescaled, and the corresponding $\sigma$ values equal the x-axis values multiplied by $10^{-1}$)
}
\label{fig:split}
\end{figure}

\subsection{Semantic Alignment of Output Distributions}

To directly probe representation-level structure under small activation noise,
we design a controlled experiment in which class labels are decoupled from input semantics. Specifically, we construct multiple CIFAR-10 models where two randomly chosen classes
share the same underlying input distribution:
all original samples from one class are removed,
and half of the samples from the other class are randomly reassigned to it.
Under this setup, the two classes are semantically indistinguishable,
and any preference for one over the other must arise from how the model organizes
its internal representations.

In this scenario, a perturbation that faithfully explores the representation space
should increasingly transfer samples between the two classes as noise grows.
Figure~\ref{fig:split} reports the predicted probability of the reassigned class
for samples originating from the source class under increasing perturbation strength.
While input and parameter perturbations (implementation details in Appendix~\ref{app:perturb_impl}) do not exhibit a consistent shift,
activation perturbation induces a clear and consistent preference toward the reassigned class. This behavior indicates that, in this controlled setting, activation perturbation induces prediction transitions that are more closely aligned with the representation structure learned by the model, while input- and parameter-level perturbations do not exhibit comparable transitions under the same conditions.

\subsection{Revealing Training Imprints}
\label{subsec:escape_fragmentation}

\begin{figure}[htb]
\centering
\includegraphics[width=0.7\linewidth]{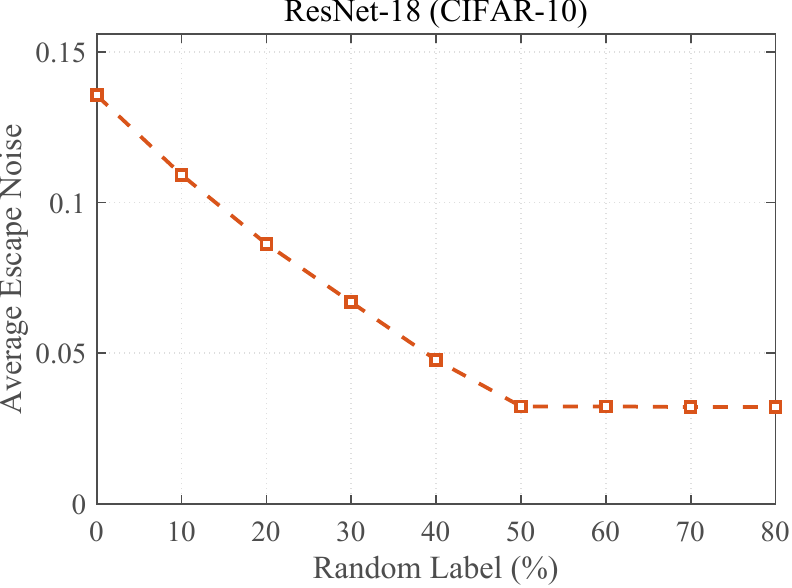}
\caption{Average escape noise for models with different percentages of samples randomly labeled.}
\label{fig:randomlabel}
\end{figure}

Beyond semantic alignment, we next examine whether APEX can reveal how training distributions shape local representation structure.
To this end, we consider models trained on datasets with increasing proportions
of randomly assigned labels, a controlled setting known to induce fragmented and irregular decision regions. We examine whether APEX faithfully probes this training-induced fragmentation. 

To quantify this effect, we introduce \emph{escape noise} as a scalar descriptor
to characterize when a sample’s prediction becomes unstable. Formally, we define the escape noise of a sample as the smallest noise magnitude $\sigma$
at which the predicted probability of its predicted class $k^\ast$ drops below a fixed reference level:
\begin{align}
  \hat P_x(k^\ast;\sigma) \le 0.5 .
\end{align}
This operational definition provides a convenient way to summarize the relative sensitivity of samples
to activation perturbations.

Figure~\ref{fig:randomlabel} reports results on CIFAR-10 with a ResNet-18 model, showing that the average escape noise decreases monotonically with the random-label ratio. This trend indicates that samples escape their original predictions under much weaker perturbations when local class structure is disrupted. Consistent behavior is observed across additional dataset--architecture pairs, as reported in Appendix~\ref{app:randomlabel}. This behavior indicates that APEX in the small-noise regime
is sensitive to training-induced fragmentation of local representation structure.

By contrast, when the same experiment is conducted with input- or parameter-level perturbations,
we do not observe a consistent relationship between perturbation strength and random-label ratio
(Appendix~\ref{app:randomlabel_ablation}), suggesting that these perturbation sites are less effective at
revealing training-induced differences in representation behavior under this setting.

\subsection{Connection to Sample-Level Regularity}

In the small-noise regime, activation perturbation naturally induces
sample-dependent prediction stability, reflecting how individual samples
are situated within the learned representation space.
This mirrors prior observations that prediction instability under input-level
Gaussian perturbations correlates with a sample’s proximity to the decision boundary
\cite{inputperturbation,ford2019adversarial}.
Having established that APEX provides a more general probe of
representation structure, we briefly examine how its induced stability
relates to established notions of sample-level regularity.

\begin{figure}[h!]
    \centering
    % ---- Subfigure 1 ----
    \begin{subfigure}[b]{0.5\textwidth}
        \centering
        \includegraphics[width=\textwidth]{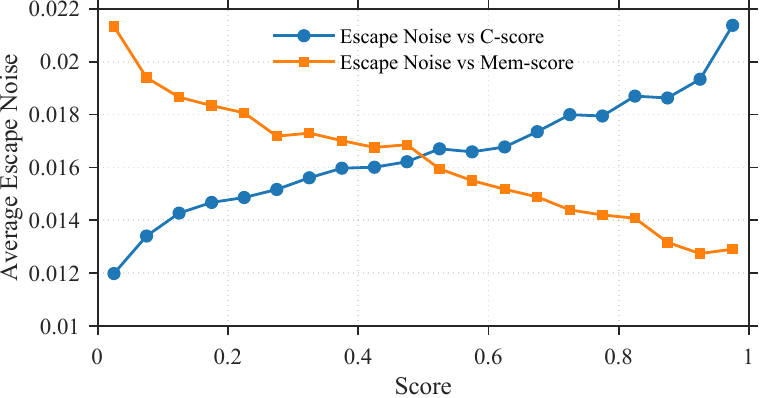}
        \caption{ImageNet.}
        \label{fig:imagenet_EN}
    \end{subfigure}
    % ---- Subfigure 2 ----
    \begin{subfigure}[b]{0.5\textwidth}
        \centering
        \includegraphics[width=\textwidth]{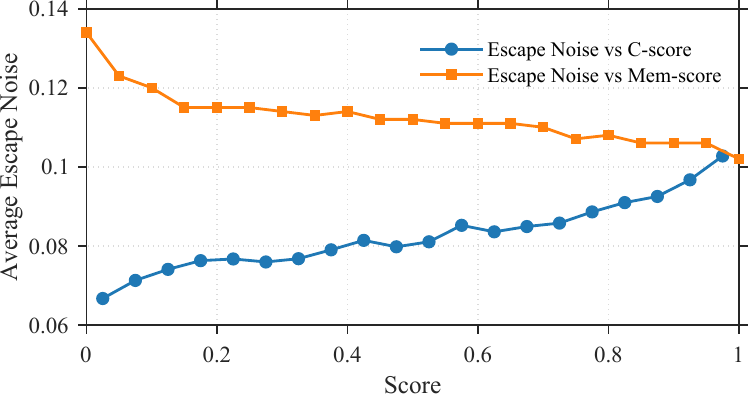}
        \caption{CIFAR-100.}
        \label{fig:imagenet_Acc}
    \end{subfigure}

    \caption{
    Relationship between average escape noise and memorization score (Mem-score), consistency score (C-score) on ImageNet and CIFAR-100.  
    }
    \label{fig:EN_regularity}
\end{figure}

\textbf{Alignment with Established Regularity Measures.}

We compare escape noise with established sample regularity metrics, including memorization score~\cite{feldman2020does,whatneuralnetworkmem} and consistency score~\cite{C-score}. As shown in Figure~\ref{fig:EN_regularity}, escape noise exhibits strong correlations with both metrics on ImageNet and CIFAR-100 for ResNet-50 models (C-scores for CIFAR-100 follow the Inception-based setting of~\cite{C-score}), indicating that prediction stability under small activation noise
is broadly consistent with known notions of sample regularity.

Samples with high C-scores and low memorization scores consistently exhibit large escape-noise values, whereas irregular or highly memorized samples escape under substantially smaller perturbations. Similar trends are also observed under input-level noise (Appendix~\ref{app:EN_regularity_detail}), confirming that activation perturbation preserves known stability-regularity relationships. 
Examples are visualized in Figure~\ref{fig:MemAlign}. 

\begin{figure}[h!]  % 'h!'表示尽可能在当前位置插入
\centering  % 图片居中
\includegraphics[width=\linewidth]{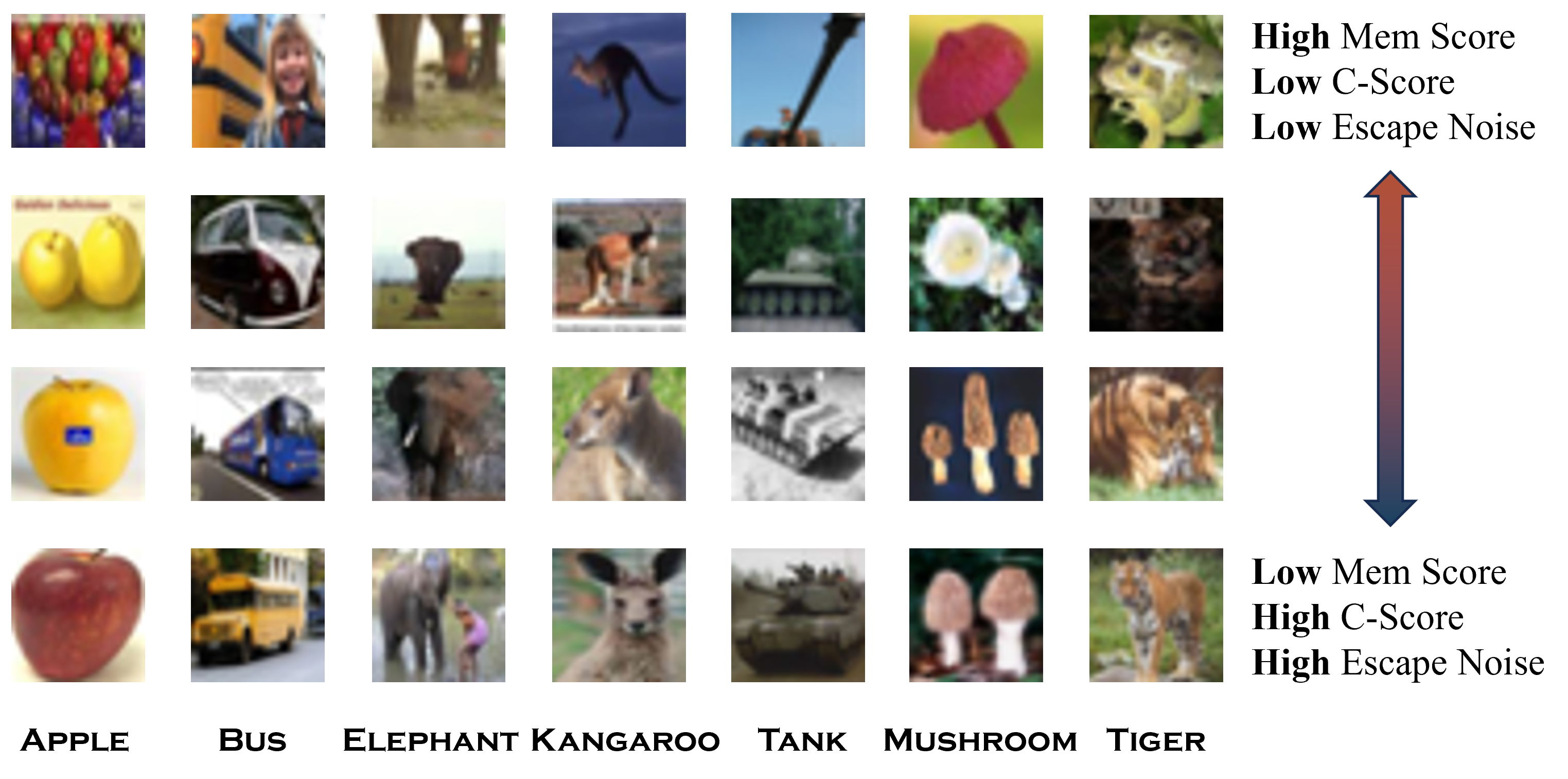}  
\caption{Examples of the regular and irregular samples and their ranges of memorization scores, consistency scores and escape noises.}
\label{fig:MemAlign}
\end{figure}

\textbf{Computational Efficiency}

The above results suggest that escape noise provides a simple and effective
scalar summary of this stability, and can therefore serve as a lightweight metric for quantifying sample-level regularity. Unlike memorization-based metrics \cite{feldman2020does,whatneuralnetworkmem}, which require training large ensembles
of shadow models, escape noise can be computed using a single trained model with lightweight Monte Carlo inference.
This makes it a scalable alternative for large-scale analysis, while remaining consistent with established regularity indicators.

\section{Model-Level Stationary Behavior under Large Activation Noise}
\label{sec:phe2}

\subsection{Implications of the Large-Noise Regime}

The theoretical analysis in Section~\ref{sec:mathInterpretation} predicts that,
as the perturbation magnitude becomes sufficiently large,
the noise-driven component dominates the forward signal,
causing the network output to become independent of the input sample.
This behavior is already hinted at in Figure~\ref{fig:OD-example},
where the rightmost columns show that, under large noise,
the output distributions of different samples become identical.

We verify this prediction quantitatively using a controlled experiment.
For a fixed noise magnitude $\sigma$, we collect the output distributions
obtained from multiple randomly selected input samples
and compute the average pairwise JS divergence between them. In addition, for each sample, we measure the JS divergence between its output distributions at consecutive noise levels. As the noise magnitude increases, both quantities rapidly decrease and
converge to near-zero values, indicating that
(i) output distributions produced from different input samples
become identical, and
(ii) the distributions stabilize across increasing noise strengths. The results are shown in Appendix~\ref{app:stationary}. 

Despite this shared sample-independence, the stationary output distributions induced by different perturbation sites exhibit clear qualitative differences. Under input-level perturbations, the stationary output distribution tends to collapse onto a dominant class,
suggesting a degenerate form of stationarity.
In contrast, activation perturbation yields stationary output distributions
that remain substantially more dispersed across classes,
indicating a richer and more stable model-level signature. A quantitative comparison based on output entropy
is reported in Figure~\ref{fig:entropy_stationary}.

\begin{figure}[h]
    \centering
    \includegraphics[width=0.6\linewidth]{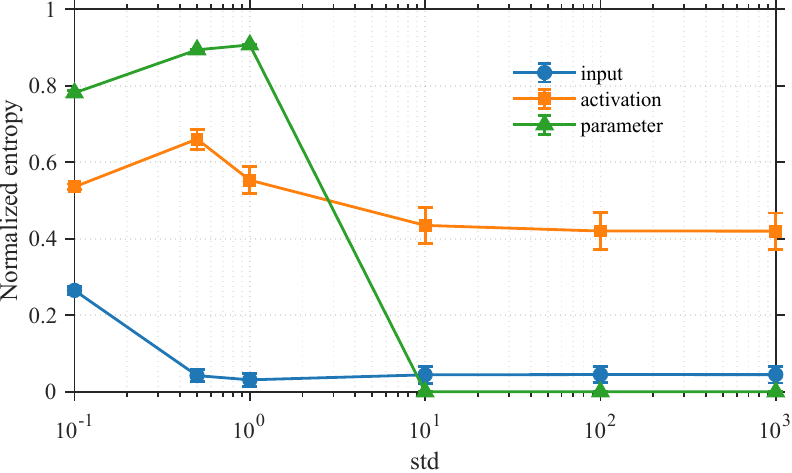}
    \caption{
    Normalized entropy of output distributions
    under different $\sigma$. 
    Results are averaged over ten Inception models on CIFAR-100.
    }
    \label{fig:entropy_stationary}
\end{figure}

\begin{figure*}[h!]  % 'h!'表示尽可能在当前位置插入
\centering  % 图片居中
\includegraphics[width=\linewidth]{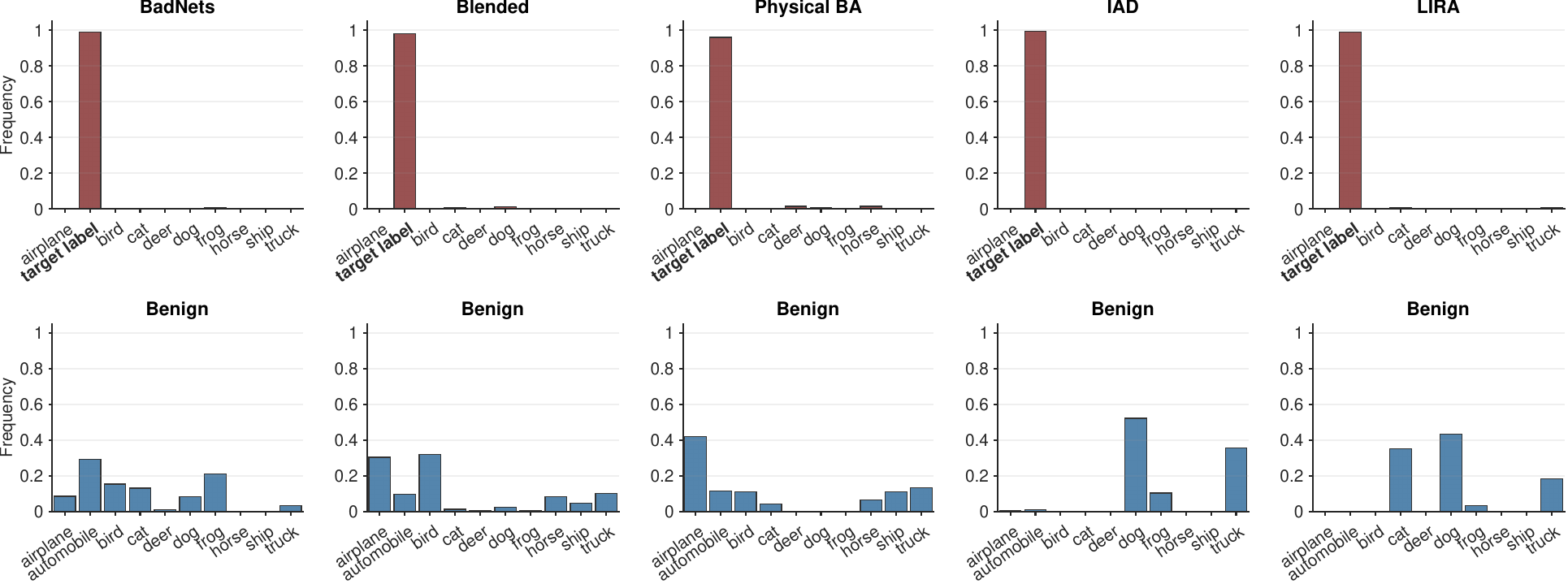}  % 图片宽度占满双栏
\caption{Real examples of model-level distributions for benign and backdoored models under different backdoor attacks: BadNets~\cite{badnets}, Blended~\cite{Blended}, Physical BA~\cite{physical}, IAD~\cite{IAD} and LIRA~\cite{LIRA}.}  % 图片标题
\label{fig:BDvisualization}  % 图片标签，用于交叉引用
\end{figure*}

\subsection{Backdoor Models and Target-Aligned Collapse}

We next examine whether these stationary output distributions reflect
semantically interpretable biases induced in the learned representations during training.
Modern deep networks are highly over-parameterized~\cite{StillRethinkingGeneralization},
and backdoored models are known to devote a disproportionate share
of this capacity to their target classes~\cite{liu2018sparecapacity}.
Backdoored models provide a particularly suitable testbed,
as the induced bias is semantically interpretable and has been shown
to be associated with internal representations rather than input-space geometry alone~\cite{ASurvey, LIRA}.
To assess whether such target-aligned biases are observable in the stationary output distributions,
we evaluate APEX across five backdoor attack methods
and three network architectures.
Representative model-level output distributions are shown in
Figure~\ref{fig:BDvisualization}.

\paragraph{Probing Backdoor Trigger Bias.} For benign models, the stationary output distributions induced by activation perturbation remain broadly spread across multiple classes.
In contrast, backdoored models exhibit a pronounced collapse:
under sufficiently large activation noise,
the vast majority of Monte Carlo runs predict the backdoor target class. More experimental settings are provided in
Appendix~\ref{app:bdsettings}. This target-aligned collapse indicates that,
once sample-dependent information is removed,
the learned representation of a backdoored model
is intrinsically biased toward the target class.

This distinction suggests that APEX provides an effective way to expose backdoored model biases. In particular, while optimized input- or parameter-level modifications can be designed or searched to induce target-aligned behavior, simple perturbations at these levels typically do not achieve this effect~\cite{MMBD,BD-BAN}.
Although input-level perturbations can also produce sample-independent outputs, their stationary output distributions under non-optimized noise do not concentrate on the backdoor target class (Figure~\ref{fig:bd_target_prob}).

\begin{figure}[h!]  % 'h!'表示尽可能在当前位置插入
\centering  % 图片居中
\includegraphics[width=0.6\linewidth]{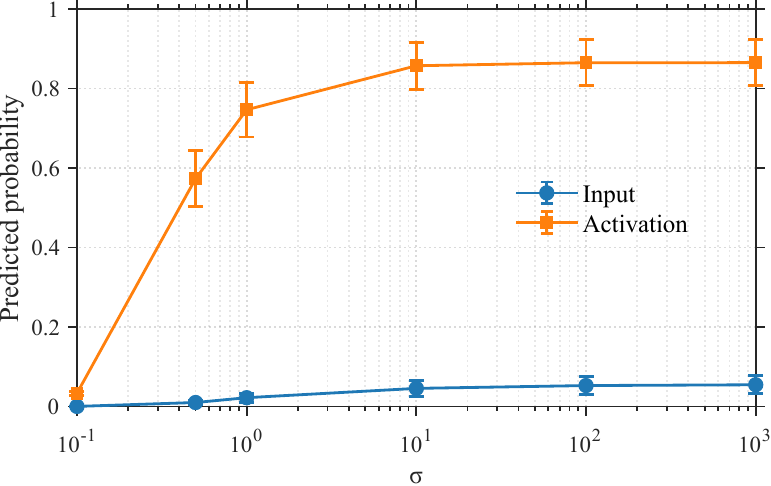}  
\caption{Average predicted probability of the backdoor target class
in the model-level output distribution under large perturbations. Results are obtained from ten CIFAR-100 backdoored ResNet-18 models.
The probability mass assigned to the target class is averaged across models.}
\label{fig:bd_target_prob}
\end{figure}

Figure~\ref{fig:benignscores} further shows that benign and backdoored models
exhibit systematically different levels of normalized entropy in their stationary output distributions,
consistent with the qualitative collapse patterns observed above.

\begin{figure*}[h!]
\centering
\includegraphics[width=\linewidth]{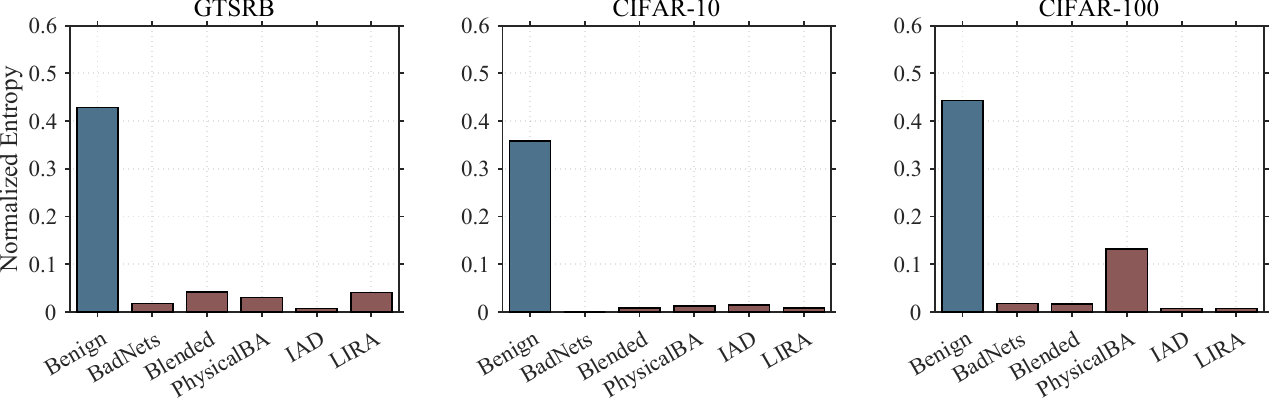}
\caption{
Average normalized entropy of benign and backdoored models across five attacks and three architecture–dataset pairs. Backdoored models consistently show entropy near zero, reflecting highly concentrated distributions.
}
\label{fig:benignscores}
\end{figure*}

\paragraph{Effect of Model Capacity.}
If the stationary output distribution under large activation noise reflects how representation space is allocated across classes, then models with smaller capacity should devote less representational volume to the backdoor target, resulting in weaker collapse and higher output entropy. To test this prediction, we analyze a family of CNNs with increasing depth trained under the same backdoor attack. As shown in Figure~\ref{fig:X2NET}, deeper models exhibit progressively stronger concentration of probability mass on the target class, while shallower models produce more dispersed stationary distributions.
These results suggest that activation perturbation may provide insight into how internal representation space is globally allocated.

\begin{figure}[h!]
\centering
\includegraphics[width=0.6\linewidth]{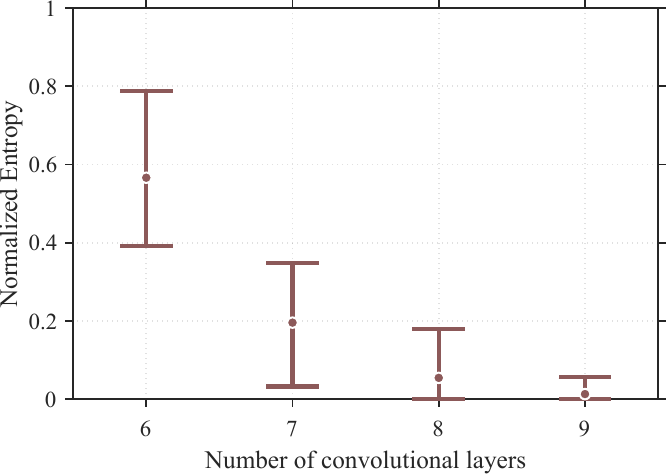}
\caption{
Normalized Entropy of backdoored models under the BadNets attack for X+2Net architectures of varying depth. Deeper models show more concentrated stationary output distributions.
}
\label{fig:X2NET}
\end{figure}

Interestingly, the trend is less pronounced in vision transformers(Appendix~\ref{app:transformer_extension}). Backdoored DeiT models exhibit partial stationary collapse, with distributions re-dispersing at larger noise levels. This suggests that different architectures may allocate representation space differently, attenuating collapse-based signals in transformers.

\section{Robustness Across Probing Configurations}
\label{sec:openquestions}

We examine how the behaviors revealed by APEX vary across different probing configurations.
These experiments address natural questions about sensitivity to design choices,
and help clarify which aspects of the observed phenomena are stable across reasonable variations.

% \section{Ablations and Robustness Analysis}
% \label{sec:openquestions}

% We conduct a series of ablation studies to verify that the behaviors revealed by APEX are intrinsic to the probing mechanism itself, rather than artifacts of confidence scaling, noise design, or specific architectural choices. All experiments follow the same training protocols described in Appendix~\ref{app:trainstats}.

\paragraph{Escape Noise vs. Confidence.}
We examine whether escape noise merely reflects model confidence accumulated during training. Across training epochs, escape-noise values remain largely unchanged despite a steady increase in average softmax confidence, indicating that escape noise is not driven by logit scaling but instead reflects local representation geometry.
Detailed results are reported in Appendix~\ref{app:confidence}.

\textbf{Noise Distribution Robustness.}
We explore whether the stationary output distribution depends on the specific noise distribution used for probing. Gaussian, uniform, and Laplace noise yield nearly identical stationary output distributions (Appendix~\ref{app:noisetype}), indicating that the effect is not tied to a particular noise model.

\textbf{Layer-wise Sensitivity.}
To gain insight into where stationary behavior emerges, we injected noise into individual ReLU layers of both clean and backdoored models and measured the average normalized entropy of the resulting output distributions. Similar to input-level perturbations, layer-wise activation perturbation can also be viewed as a special case of APEX, where noise is injected at a specific depth rather than across all layers. Perturbations to shallow layers produce sharply peaked distributions, whereas perturbations to deeper layers lead to substantially more dispersed outputs (Appendix~\ref{app:singlayer}).

\textbf{Activation Function Generality.}
Finally, we examine whether the observed target-aligned stationary behavior depends on the activation function. Models trained with Leaky ReLU and GELU exhibit the same sharp transition toward the backdoor target as activation noise increases, demonstrating that the stationary behavior arises from the perturbation mechanism itself rather than from ReLU-specific nonlinearities (Appendix~\ref{app:activationchoice}).

\section{Conclusion}

We present APEX as a unified inference-time probing paradigm that operates directly in activation space to reveal structural properties of learned representations. By inducing controlled perturbations, APEX exposes a clear behavioral transition from sample-dependent responses to model-dependent dynamics, which can be interpreted as reflecting local representational regularity and global training-induced biases. This unified view connects previously disparate phenomena, including sample regularity estimation, semantic transition analysis, and backdoor-induced target alignment, within a single framework. Together, these results position APEX as a lightweight yet powerful tool for analyzing modern neural networks.

\clearpage

\bibliography{references}

@inproceedings{chang2021privacy,
  title={On the privacy risks of algorithmic fairness},
  author={Chang, Hongyan and Shokri, Reza},
  booktitle=EuroSP,
  pages={292--303},
  year={2021},
  organization={IEEE}
}

@article{garg2023memorization,
  title={Memorization through the lens of curvature of loss function around samples},
  author={Garg, Isha and Ravikumar, Deepak and Roy, Kaushik},
  journal={arXiv preprint arXiv:2307.05831},
  year={2023}
}

@article{d2021tale,
  title={A tale of two long tails},
  author={D'souza, Daniel and Nussbaum, Zach and Agarwal, Chirag and Hooker, Sara},
  journal={arXiv preprint arXiv:2107.13098},
  year={2021}
}

@inproceedings{feldman2020does,
  title={Does learning require memorization? a short tale about a long tail},
  author={Feldman, Vitaly},
  booktitle={SIGACT Symposium on Theory of Computing},
  pages={954--959},
  year={2020},
organization={ACM}
}

@InProceedings{DeepRepresentationLearningonLong-TailedData,
author = {Liu, Jialun and Sun, Yifan and Han, Chuchu and Dou, Zhaopeng and Li, Wenhui},
title = {Deep Representation Learning on Long-Tailed Data: A Learnable Embedding Augmentation Perspective},
booktitle = {Proceedings of the IEEE/CVF Conference on Computer Vision and Pattern Recognition (CVPR)},
month = {June},
year = {2020}
}

@misc{li2025trustworthymachinelearningmemorization,
      title={Trustworthy Machine Learning via Memorization and the Granular Long-Tail: A Survey on Interactions, Tradeoffs, and Beyond}, 
      author={Qiongxiu Li and Xiaoyu Luo and Yiyi Chen and Johannes Bjerva},
      year={2025},
      eprint={2503.07501},
      archivePrefix={arXiv},
      primaryClass={cs.LG}
}

@inproceedings{DataPoisoningAttacksintheTrainingPhase,
  title={Data Poisoning Attacks in the Training Phase of Machine Learning Models: A Review},
  author={Mugdha Srivastava and Abhishek Kaushik and R{\'o}is{\'i}n Loughran and Kevin McDaid},
  booktitle={Irish Conference on Artificial Intelligence and Cognitive Science},
  year={2024}
}

@article{perturbationbased,
title = {Perturbation-based methods for explaining deep neural networks: A survey},
journal = {Pattern Recognition Letters},
volume = {150},
pages = {228-234},
year = {2021},
issn = {0167-8655},
doi = {https://doi.org/10.1016/j.patrec.2021.06.030},
author = {Maksims Ivanovs and Roberts Kadikis and Kaspars Ozols},
}

@inproceedings{whatneuralnetworkmem,
 author = {Feldman, Vitaly and Zhang, Chiyuan},
 booktitle = {Advances in Neural Information Processing Systems},
 pages = {2881--2891},
 publisher = {Curran Associates, Inc.},
 title = {What Neural Networks Memorize and Why: Discovering the Long Tail via Influence Estimation},
 volume = {33},
 year = {2020}
}

@InProceedings{C-score,
  title = 	 {Characterizing Structural Regularities of Labeled Data in Overparameterized Models},
  author =       {Jiang, Ziheng and Zhang, Chiyuan and Talwar, Kunal and Mozer, Michael C},
  booktitle = 	 {Proceedings of the 38th International Conference on Machine Learning},
  pages = 	 {5034--5044},
  year = 	 {2021},
  editor = 	 {Meila, Marina and Zhang, Tong},
  volume = 	 {139},
  series = 	 {Proceedings of Machine Learning Research},
  month = 	 {18--24 Jul},
  publisher =    {PMLR}
}

@ARTICLE{ASurvey,
  author={Li, Yiming and Jiang, Yong and Li, Zhifeng and Xia, Shu-Tao},
  journal={IEEE Transactions on Neural Networks and Learning Systems}, 
  title={Backdoor Learning: A Survey}, 
  year={2024},
  volume={35},
  number={1},
  pages={5-22},
  doi={10.1109/TNNLS.2022.3182979}}

@article{StillRethinkingGeneralization,
author = {Zhang, Chiyuan and Bengio, Samy and Hardt, Moritz and Recht, Benjamin and Vinyals, Oriol},
title = {Understanding deep learning (still) requires rethinking generalization},
year = {2021},
issue_date = {March 2021},
publisher = {Association for Computing Machinery},
address = {New York, NY, USA},
volume = {64},
number = {3},
issn = {0001-0782},
journal = {Commun. ACM},
month = feb,
pages = {107–115},
numpages = {9}
}

@inproceedings{liu2018sparecapacity,
  title={Fine-pruning: Defending against backdooring attacks on deep neural networks},
  author={Liu, Kang and Dolan-Gavitt, Brendan and Garg, Siddharth},
  booktitle={International symposium on research in attacks, intrusions, and defenses},
  pages={273--294},
  year={2018},
  organization={Springer}
}

@inproceedings{backdoorbox,
  title        = {BackdoorBox: A Python Toolbox for Backdoor Learning},
  author={Yiming Li and Mengxi Ya and Yang Bai and Yong Jiang and Shu-Tao Xia},
  booktitle    = {ICLR 2023 Workshop on Backdoor Attacks and Defenses in Machine Learning},
  year         = {2023},
  note         = {arXiv:2302.01762},
}

@ARTICLE{badnets,
  author={Gu, Tianyu and Liu, Kang and Dolan-Gavitt, Brendan and Garg, Siddharth},
  journal={IEEE Access}, 
  title={BadNets: Evaluating Backdooring Attacks on Deep Neural Networks}, 
  year={2019},
  volume={7},
  number={},
  pages={47230-47244},
  doi={10.1109/ACCESS.2019.2909068}}

@misc{Blended,
      title={Targeted Backdoor Attacks on Deep Learning Systems Using Data Poisoning}, 
      author={Xinyun Chen and Chang Liu and Bo Li and Kimberly Lu and Dawn Song},
      year={2017},
      eprint={1712.05526},
      archivePrefix={arXiv},
      primaryClass={cs.CR}
}

@inproceedings{physical,
  title     = {Backdoor Attack in the Physical World},
  author={Yiming Li and Tongqing Zhai and Yong Jiang and Zhifeng Li and Shu-Tao Xia},
  booktitle = {ICLR 2021 Workshop on RobustML},
  journal   = {arXiv},
  volume    = {2104.02361},
  year      = {2021}
}

@inproceedings{IAD,
author = {Nguyen, Tuan Anh and Tran, Tuan Anh},
title = {Input-aware dynamic backdoor attack},
year = {2020},
isbn = {9781713829546},
publisher = {Curran Associates Inc.},
address = {Red Hook, NY, USA},
booktitle = {Proceedings of the 34th International Conference on Neural Information Processing Systems},
articleno = {291},
numpages = {11},
location = {Vancouver, BC, Canada},
series = {NIPS '20}
}

@inproceedings{LIRA,
  title={LIRA: Learnable, Imperceptible and Robust Backdoor Attacks},
  author={Khoa Doan and Yingjie Lao and Weijie Zhao and Ping Li},
  booktitle={ICCV},
  pages={11946--11956},
  year={2021},
  doi={10.1109/ICCV48922.2021.01175}
}

@inproceedings{
novak2018sensitivity,
title={Sensitivity and Generalization in Neural Networks: an Empirical Study},
author={Roman Novak and Yasaman Bahri and Daniel A. Abolafia and Jeffrey Pennington and Jascha Sohl-Dickstein},
booktitle={International Conference on Learning Representations},
year={2018}
}

@article{LIUjacobian,
title = {Jacobian norm with Selective Input Gradient Regularization for interpretable adversarial defense},
journal = {Pattern Recognition},
volume = {145},
pages = {109902},
year = {2024},
issn = {0031-3203},
doi = {https://doi.org/10.1016/j.patcog.2023.109902},
author = {Deyin Liu and Lin Yuanbo Wu and Bo Li and Farid Boussaid and Mohammed Bennamoun and Xianghua Xie and Chengwu Liang}
}

@inproceedings{rankdiminishing,
author = {Feng, Ruili and Zheng, Kecheng and Huang, Yukun and Zhao, Deli and Jordan, Michael and Zha, Zheng-Jun},
title = {Rank diminishing in deep neural networks},
year = {2022},
isbn = {9781713871088},
publisher = {Curran Associates Inc.},
address = {Red Hook, NY, USA},
booktitle = {Proceedings of the 36th International Conference on Neural Information Processing Systems},
articleno = {2395},
numpages = {12},
location = {New Orleans, LA, USA},
series = {NIPS '22}
}

@INPROCEEDINGS{decision_karimi,
  author={Karimi, Hamid and Derr, Tyler},
  booktitle={2022 21st IEEE International Conference on Machine Learning and Applications (ICMLA)}, 
  title={Decision Boundaries of Deep Neural Networks}, 
  year={2022},
  volume={},
  number={},
  pages={1085-1092},
  doi={10.1109/ICMLA55696.2022.00179}}

@inproceedings{fawzi-geometry,
author = {Fawzi, Alhussein and Moosavi-Dezfooli, Seyed-Mohsen and Frossard, Pascal and Soatto, Stefano},
year = {2018},
month = {06},
pages = {3762-3770},
title = {Empirical Study of the Topology and Geometry of Deep Networks},
doi = {10.1109/CVPR.2018.00396}
}

@inproceedings{inputperturbation,
author = {Fawzi, Alhussein and Moosavi-Dezfooli, Seyed-Mohsen and Frossard, Pascal},
title = {Robustness of classifiers: from adversarial to random noise},
year = {2016},
isbn = {9781510838819},
publisher = {Curran Associates Inc.},
address = {Red Hook, NY, USA},
booktitle = {Proceedings of the 30th International Conference on Neural Information Processing Systems},
pages = {1632–1640},
numpages = {9},
location = {Barcelona, Spain},
series = {NIPS'16}
}

@misc{shahani2025noiseinjectionsystemicallydegrades,
      title={Noise Injection Systemically Degrades Large Language Model Safety Guardrails}, 
      author={Prithviraj Singh Shahani and Kaveh Eskandari Miandoab and Matthias Scheutz},
      year={2025},
      eprint={2505.13500},
      archivePrefix={arXiv},
      primaryClass={cs.CL}
}

@InProceedings{perturb-neuron-davis21a,
  title = 	 {Catformer: Designing Stable Transformers via Sensitivity Analysis},
  author =       {Davis, Jared Q and Gu, Albert and Choromanski, Krzysztof and Dao, Tri and Re, Christopher and Finn, Chelsea and Liang, Percy},
  booktitle = 	 {Proceedings of the 38th International Conference on Machine Learning},
  pages = 	 {2489--2499},
  year = 	 {2021},
  editor = 	 {Meila, Marina and Zhang, Tong},
  volume = 	 {139},
  series = 	 {Proceedings of Machine Learning Research},
  month = 	 {18--24 Jul},
  publisher =    {PMLR}
}

@article{Sharma2023InvestigatingWD,
  title={Investigating Weight-Perturbed Deep Neural Networks with Application in Iris Presentation Attack Detection},
  author={Renu Sharma and Redwan Sony and Arun Ross},
  journal={2024 IEEE/CVF Winter Conference on Applications of Computer Vision Workshops (WACVW)},
  year={2024},
  pages={1082-1091}
}

@inproceedings{
li2018measuring,
title={Measuring the Intrinsic Dimension of Objective Landscapes},
author={Chunyuan Li and Heerad Farkhoor and Rosanne Liu and Jason Yosinski},
booktitle={International Conference on Learning Representations},
year={2018}
}

@INPROCEEDINGS{Internal-representation-tishby,
  author={Tishby, Naftali and Zaslavsky, Noga},
  booktitle={2015 IEEE Information Theory Workshop (ITW)}, 
  title={Deep learning and the information bottleneck principle}, 
  year={2015},
  volume={},
  number={},
  pages={1-5},
  doi={10.1109/ITW.2015.7133169}}

@InProceedings{ford2019adversarial,
  title = 	 {Adversarial Examples Are a Natural Consequence of Test Error in Noise},
  author =       {Gilmer, Justin and Ford, Nicolas and Carlini, Nicholas and Cubuk, Ekin},
  booktitle = 	 {Proceedings of the 36th International Conference on Machine Learning},
  pages = 	 {2280--2289},
  year = 	 {2019},
  editor = 	 {Chaudhuri, Kamalika and Salakhutdinov, Ruslan},
  volume = 	 {97},
  series = 	 {Proceedings of Machine Learning Research},
  month = 	 {09--15 Jun},
  publisher =    {PMLR}
}

@inproceedings{cifar,
  title={Learning multiple layers of features from tiny images},
  author={Krizhevsky, A. et al.},
  booktitle={Univ. Toronto Tech. Rep.},
  year={2009}
}

@inproceedings{GTSRB,
  author = {Stallkamp, J. et al.},
  title = {The German Traffic Sign Recognition Benchmark: A Multi-Class Classification Competition},
  booktitle = {IEEE IJCNN},
  pages = {1453--1460},
  year = {2011}
}

@inproceedings{resnet,
  title={Deep Residual Learning for Image Recognition},
  author={He, K. et al.},
  booktitle={Proc. CVPR},
  pages={770--778},
  year={2016}
}

@article{ILSVRC15,
Author = {Olga Russakovsky and Jia Deng and Hao Su and Jonathan Krause and Sanjeev Satheesh and Sean Ma and Zhiheng Huang and Andrej Karpathy and Aditya Khosla and Michael Bernstein and Alexander C. Berg and Li Fei-Fei},
Title = {{ImageNet Large Scale Visual Recognition Challenge}},
Year = {2015},
journal   = {International Journal of Computer Vision (IJCV)},
doi = {10.1007/s11263-015-0816-y},
volume={115},
number={3},
pages={211-252}
}

@InProceedings{Simonyan15,
  author       = "Karen Simonyan and Andrew Zisserman",
  title        = "Very Deep Convolutional Networks for Large-Scale Image Recognition",
  booktitle    = "International Conference on Learning Representations",
  year         = "2015",
}

@INPROCEEDINGS{densenet,
  author={Huang, Gao and Liu, Zhuang and Van Der Maaten, Laurens and Weinberger, Kilian Q.},
  booktitle={2017 IEEE Conference on Computer Vision and Pattern Recognition (CVPR)}, 
  title={Densely Connected Convolutional Networks}, 
  year={2017},
  volume={},
  number={},
  pages={2261-2269},
  doi={10.1109/CVPR.2017.243}}

@inproceedings{
BD-BAN,
title={{BAN}: Detecting Backdoors Activated by Neuron Noise},
author={Xiaoyun Xu and Zhuoran Liu and Stefanos Koffas and Shujian Yu and Stjepan Picek},
booktitle={The Thirty-eighth Annual Conference on Neural Information Processing Systems},
year={2024},
url={https://openreview.net/forum?id=asYYSzL4N5}
}

@INPROCEEDINGS {MMBD,
author = { Wang, Hang and Xiang, Zhen and Miller, David J. and Kesidis, George },
booktitle = { 2024 IEEE Symposium on Security and Privacy (SP) },
title = {{ MM-BD: Post-Training Detection of Backdoor Attacks with Arbitrary Backdoor Pattern Types Using a Maximum Margin Statistic }},
year = {2024},
volume = {},
ISSN = {},
pages = {1994-2012},
doi = {10.1109/SP54263.2024.00015},
url = {https://doi.ieeecomputersociety.org/10.1109/SP54263.2024.00015},
publisher = {IEEE Computer Society},
address = {Los Alamitos, CA, USA},
month =May}
\bibliographystyle{unsrtnat}

%%%%%%%%%%%%%%%%%%%%%%%%%%%%%%%%%%%%%%%%%%%%%%%%%%%%%%%%%%%%%%%%%%%%%%%%%%%%%%%
%%%%%%%%%%%%%%%%%%%%%%%%%%%%%%%%%%%%%%%%%%%%%%%%%%%%%%%%%%%%%%%%%%%%%%%%%%%%%%%
% APPENDIX
%%%%%%%%%%%%%%%%%%%%%%%%%%%%%%%%%%%%%%%%%%%%%%%%%%%%%%%%%%%%%%%%%%%%%%%%%%%%%%%
%%%%%%%%%%%%%%%%%%%%%%%%%%%%%%%%%%%%%%%%%%%%%%%%%%%%%%%%%%%%%%%%%%%%%%%%%%%%%%%
\newpage
\appendix
\onecolumn
\section{Memorization and long-tailed distribution}\label{app:longtailed}

Beyond its impact on generation performance, memorization in deep neural networks has far-reaching implications for safety and security considerations, including privacy leakage and fairness disparities \cite{feldman2020does,chang2021privacy,garg2023memorization}.

The memorization score \cite{whatneuralnetworkmem} measures the extent to which a trained model's prediction on a particular example depends on whether that example was included during training. Let $\mathcal{A}$ be a (possibly randomized) learning algorithm and $D_{\mathrm{tr}}$ a training set. For an example $i=(x_i,y_i)\in D_{\mathrm{tr}}$, its (label) memorization score is defined via a leave-one-out comparison:
\begin{align}\label{eq:mem}
\mathrm{mem}(\mathcal{A}, D_{\mathrm{tr}}, i)
&= \Pr_{h \leftarrow \mathcal{A}(D_{\mathrm{tr}})} \big[ h(x_i)=y_i \big]
- \Pr_{h \leftarrow \mathcal{A}(D_{\mathrm{tr}}\setminus\{i\})} \big[ h(x_i)=y_i \big],
\end{align}
where $D_{\mathrm{tr}}\setminus\{i\}$ denotes the dataset obtained by removing $i$ from $D_{\mathrm{tr}}$.

Intuitively, $\mathrm{mem}(\cdot)$ captures the marginal influence of a single training point on the model's correctness at that point. A larger value indicates that the model relies more heavily on $i$ to predict $y_i$, which is commonly interpreted as a signal of elevated privacy exposure since the model's behavior changes noticeably when $i$ is present versus absent. Empirically, high memorization scores tend to concentrate on atypical or hard-to-learn instances, whereas low scores are more often associated with typical, easy examples across datasets \cite{whatneuralnetworkmem}, aligning with qualitative notions of example difficulty. \citet{d2021tale} showing that the long tail itself is heterogeneous: atypical-but-valid samples correspond to reducible (epistemic) uncertainty, whereas noisy samples induce irreducible (aleatoric) uncertainty and exhibit distinct learning dynamics. A recent survey echoes that conflating these different sources of tail behavior obscures the interpretation of memorization, whose implications for generalization, privacy, and robustness depend critically on whether memorization arises from atypicality or noise \cite{li2025trustworthymachinelearningmemorization}.

\section{Decomposition of Activation-Perturbed Activations}
\label{app:DecompositionProP}

We detail the layerwise decomposition used in Theorem~\ref{thm:activation-decomposition}.

\begin{proof}[Decomposition after ReLU with uniformly bounded residual]
Assume noise is added after each ReLU:
\[
\tilde{a}_1(x;\sigma)=\mathrm{ReLU}(W_1x+b_1)+\sigma\xi_1,
\]
\[
\tilde{a}_\ell(x;\sigma)=\mathrm{ReLU}(W_\ell\tilde{a}_{\ell-1}(x;\sigma)+b_\ell)+\sigma\xi_\ell,
\quad \ell=2,\dots,L.
\]

We show by induction that for each $\ell$ there exist $v_\ell$ (depending on $(W,b,\xi)$ but not on $x$)
and $r_\ell(x;\sigma)$ such that
\[
\tilde{a}_\ell(x;\sigma)=\sigma v_\ell + r_\ell(x;\sigma),
\qquad
\|r_\ell(x;\sigma)\| \le B_\ell(R;W,b),
\]
uniformly over $x\in\mathcal{X}=\{x:\|x\|\le R\}$ and $\sigma>0$.

\textbf{Base case ($\ell=1$).}
Define
\[
v_1 := \xi_1,\qquad r_1(x;\sigma):=\mathrm{ReLU}(W_1x+b_1).
\]
Then $\tilde{a}_1(x;\sigma)=\sigma v_1+r_1(x;\sigma)$ and $\|r_1(x;\sigma)\|\le \|W_1\|R+\|b_1\|$.

\textbf{Inductive step.}
Assume for layer $\ell-1$ that
\[
\tilde{a}_{\ell-1}(x;\sigma)=\sigma v_{\ell-1}+r_{\ell-1}(x;\sigma).
\]
Then
\[
W_\ell\tilde{a}_{\ell-1}(x;\sigma)+b_\ell
=\sigma W_\ell v_{\ell-1}+\big(W_\ell r_{\ell-1}(x;\sigma)+b_\ell\big).
\]
Applying Lemma~\ref{lem:relu-residual} with
$a=W_\ell v_{\ell-1}$ and $d=W_\ell r_{\ell-1}(x;\sigma)+b_\ell$, we obtain
\[
\mathrm{ReLU}(\sigma W_\ell v_{\ell-1}+W_\ell r_{\ell-1}(x;\sigma)+b_\ell)
=\sigma\,\mathrm{ReLU}(W_\ell v_{\ell-1})+\Delta_\ell(x;\sigma),
\]
where
\[
\|\Delta_\ell(x;\sigma)\|\le \|W_\ell r_{\ell-1}(x;\sigma)+b_\ell\|.
\]
Therefore,
\[
\tilde{a}_\ell(x;\sigma)
=\sigma\,\mathrm{ReLU}(W_\ell v_{\ell-1})+\Delta_\ell(x;\sigma)+\sigma\xi_\ell
=\sigma v_\ell + r_\ell(x;\sigma),
\]
by defining
\[
v_\ell := \mathrm{ReLU}(W_\ell v_{\ell-1})+\xi_\ell,
\qquad
r_\ell(x;\sigma):=\Delta_\ell(x;\sigma).
\]
Moreover,
\[
\|r_\ell(x;\sigma)\|
\le \|W_\ell r_{\ell-1}(x;\sigma)+b_\ell\|
\le \|W_\ell\|\,\|r_{\ell-1}(x;\sigma)\|+\|b_\ell\|.
\]
Iterating this recursion yields the bound $\|r_\ell(x;\sigma)\|\le B_\ell(R;W,b)$ from
Appendix~\ref{app:boundness}, which is independent of $\sigma$.
\end{proof}

\section{Uniform boundedness of the residual term}
\label{app:boundness}
Fix an induced operator norm $\|\cdot\|$ and assume inputs lie in a bounded
set $\mathcal{X}=\{x:\|x\|\le R\}$ with $R<\infty$.
Let $M_\ell:=\|W_\ell\|$ and $b_\ell^\star:=\|b_\ell\|$.
From the inductive step in Appendix~\ref{app:DecompositionProP}, we have
\[
\|r_\ell(x;\sigma)\|
\le \|W_\ell r_{\ell-1}(x;\sigma)+b_\ell\|
\le M_\ell \|r_{\ell-1}(x;\sigma)\|+b_\ell^\star.
\]
With $r_1(x;\sigma)=\mathrm{ReLU}(W_1x+b_1)$ and $\|r_1(x;\sigma)\|\le M_1R+b_1^\star$,
recursively we obtain
\[
\|r_\ell(x;\sigma)\| \le B_\ell(R;W,b)
:= \Big(\prod_{j=1}^{\ell} M_j\Big) R
   + \sum_{k=1}^{\ell-1}\Big(\prod_{j=k+1}^{\ell} M_j\Big) b_k^\star+ b_l^\star.
\]
Hence $r_\ell(x;\sigma)$ is bounded on $\mathcal{X}$ by $B_\ell(R;W,b)$, which depends
on $(W,b)$ and $R$ but is independent of $\sigma$.

\section{Stationary prediction distribution at large noise}
\label{app:GPD}
We formalize the large-noise regime under noise resampling at inference time.

Write the logits as
\[
s(x;\sigma)=U\tilde{a}_L(x;\sigma)+c
=\sigma\,Uv_L + e(x;\sigma),
\qquad
e(x;\sigma):=Ur_L(x;\sigma)+c.
\]
Let $C:=\|U\|_\infty B_L(R;W,b)+\|c\|_\infty$, so that
$\|e(x;\sigma)\|_\infty\le C$ for all $x\in\mathcal{X}$ and $\sigma>0$
(Appendix~\ref{app:boundness}).

\paragraph{Conditional stabilization for a fixed noise draw.}
Condition on a fixed draw of the injected noises (hence a fixed $v_L$).
Let $j^\star=\arg\max_i (Uv_L)_i$ and define the top-1 margin
\[
\delta := (Uv_L)_{j^\star}-\max_{i\ne j^\star}(Uv_L)_i.
\]
Assume $\delta>0$. If
\[
\sigma>\sigma^\star:=\frac{2C}{\delta},
\]
then for all $x\in\mathcal{X}$,
\[
\arg\max_i s_i(x;\sigma)=\arg\max_i (Uv_L)_i = j^\star.
\]
Indeed, for any $i\ne j^\star$,
\[
s_{j^\star}(x;\sigma)-s_i(x;\sigma)
=\sigma\big((Uv_L)_{j^\star}-(Uv_L)_i\big) + \big(e_{j^\star}(x;\sigma)-e_i(x;\sigma)\big)
\ge \sigma\delta - 2C > 0.
\]

\paragraph{Stationary limit under noise resampling.}
Since noises are resampled at each inference, $v_L$ varies randomly across runs.
The previous result implies that as $\sigma\to\infty$, the prediction distribution
converges to the distribution of $\arg\max_i (Uv_L)_i$, which depends on the trained
model $(W,b,U,c)$ and the noise law, but not on the input $x$.

\section{Lemma: ReLU residual bound}
\label{app:lemma}
\begin{lemma}
\label{lem:relu-residual}
For any vectors $a,d\in\mathbb{R}^m$ and scalar $\sigma>0$, define
\[
\Delta := \mathrm{ReLU}(\sigma a+d)-\sigma\,\mathrm{ReLU}(a).
\]
Then
\[
\mathrm{ReLU}(\sigma a+d)=\sigma\,\mathrm{ReLU}(a)+\Delta,
\qquad
\|\Delta\|\le \|d\|.
\]
\end{lemma}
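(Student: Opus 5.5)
The plan is to reduce the vector statement to a scalar one, using two elementary properties of $\mathrm{ReLU}$: it is positively homogeneous and it is $1$-Lipschitz. Since $\mathrm{ReLU}$ acts coordinatewise, the identity $\mathrm{ReLU}(\sigma a+d)=\sigma\,\mathrm{ReLU}(a)+\Delta$ is simply the definition of $\Delta$. All the content lies in the norm bound.

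\emph{Step 1 (homogeneity).} For $\sigma>0$ and any real $t$ we have $\max(\sigma t,0)=\sigma\max(t,0)$. Hence $\sigma\,\mathrm{ReLU}(a)=\mathrm{ReLU}(\sigma a)$ coordinatewise, and
\[
\Delta=\mathrm{ReLU}(\sigma a+d)-\mathrm{ReLU}(\sigma a).
\]
This removes $\sigma$ from the problem entirely.

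\emph{Step 2 (scalar Lipschitz bound).} For reals $s,t$ we have $|\max(s,0)-\max(t,0)|\le |s-t|$. To check this, split into cases by the signs of $s$ and $t$. If both are nonnegative the two sides are equal. If both are negative the left side is $0$. If exactly one is negative, say $s\ge 0>t$, the left side is $s\le s-t$. Applying this with $s=\sigma a_i+d_i$ and $t=\sigma a_i$ gives $|\Delta_i|\le |d_i|$ for every coordinate $i$.

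\emph{Step 3 (from coordinates to the norm).} I will conclude $\|\Delta\|\le\|d\|$ from the coordinatewise domination $|\Delta_i|\le|d_i|$. This holds for any monotone (absolute) norm, i.e.\ one with $\|x\|$ depending only on $(|x_i|)_i$ and nondecreasing in each $|x_i|$. All $\ell_p$ norms, $1\le p\le\infty$, are of this kind, and these are the norms used in the paper (e.g.\ $\|\cdot\|_\infty$ in the logit bound). For $\ell_p$ the bound is immediate from $\sum_i|\Delta_i|^p\le\sum_i|d_i|^p$, and for $\ell_\infty$ from $\max_i|\Delta_i|\le\max_i|d_i|$.

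The main obstacle is Step 3. The lemma is stated for a generic norm, but coordinatewise domination does not imply norm domination for an arbitrary norm on $\mathbb{R}^m$: a norm that is not monotone can fail this, while $\ell_p$ norms do not. I would therefore make explicit that $\|\cdot\|$ is taken to be an $\ell_p$ norm (or any absolute norm) on activation vectors, with the matrix norms $\|W_\ell\|$ in Appendix~\ref{app:boundness} being the corresponding induced operator norms. Under that reading the proof is complete. Without it, one can only get $\|\Delta\|\le \kappa\|d\|$ with a norm-equivalence constant $\kappa$. That weaker bound would still suffice for Theorem~\ref{thm:activation-decomposition}, since it only changes $B_\ell(R;W,b)$ by $\sigma$-independent factors.
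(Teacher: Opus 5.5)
Your proof is correct and follows the same route as the paper, which likewise uses positive homogeneity, $\mathrm{ReLU}(\sigma a)=\sigma\,\mathrm{ReLU}(a)$, together with the $1$-Lipschitz property of $\mathrm{ReLU}$ applied to $\mathrm{ReLU}(\sigma a+d)-\mathrm{ReLU}(\sigma a)$. Your Step~3 caveat is a genuine refinement that the paper omits. Its appeal to ``$\mathrm{ReLU}$ is $1$-Lipschitz'' silently requires an absolute (monotone) norm such as $\ell_p$, and for a general norm $\|\Delta\|\le\|d\|$ can fail: take $\|x\|=|x_1-x_2|+\tfrac12|x_1+x_2|$, $\sigma a=(-1,0)$ and $d=(1,1)$, which give $\|d\|=1$ but $\|\Delta\|=\|(0,1)\|=\tfrac32$.
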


\begin{proof}
Since ReLU is positively homogeneous for $\alpha\ge 0$,
$\mathrm{ReLU}(\sigma a)=\sigma\,\mathrm{ReLU}(a)$.
As ReLU is $1$-Lipschitz,
\[
\|\mathrm{ReLU}(\sigma a+d)-\mathrm{ReLU}(\sigma a)\|\le \|d\|.
\]
Combining the two identities gives $\|\Delta\|\le \|d\|$.
\end{proof}

\clearpage
\section{Experimental Setup}
\label{app:trainstats}

\paragraph{Models and Datasets.}
We evaluate our method on CIFAR-10, CIFAR-100~\cite{cifar}, GTSRB~\cite{GTSRB}, and ImageNet-1K~\cite{ILSVRC15} using a diverse set of vision architectures. Our experiments cover convolutional models including ResNet-18, ResNet-50~\cite{resnet}, VGG-11~\cite{Simonyan15}, DenseNet~\cite{densenet}, Inception, and a family of X+2Net architectures with varying depth, as well as the transformer-based DeiT-Small model. For ImageNet-1K experiments, ResNet-50 is initialized from ImageNet-1K pretrained weights and fine-tuned on the target dataset; all other models and datasets are trained from scratch.

\paragraph{Training Protocol.}
Unless otherwise stated, all convolutional models are trained using stochastic gradient descent (SGD) with momentum.
We adopt a largely unified training configuration across datasets, with minor dataset-specific adjustments. Experiments on CIFAR-10, CIFAR-100, and GTSRB~\cite{cifar,GTSRB} use SGD with momentum $0.9$.
The following configuration is used:
\begin{itemize}
    \item Batch size: 256
    \item Number of dataloader workers: 4
    \item Learning rate: $0.1$
    \item Weight decay: $5\times10^{-4}$
    \item Learning rate scheduler: step decay with milestones at epochs 50 and 80
    \item Learning rate decay factor: $\gamma = 0.1$
    \item Total training epochs: 100
\end{itemize}

\paragraph{Inference-Time Perturbation and Monte Carlo Estimation.}
All perturbation experiments are conducted strictly at inference time, with model parameters fixed.
To estimate the output distribution we perform 1{,}000 Monte Carlo forward passes per input.
For ImageNet-1K, due to computational constraints, we perform 100 Monte Carlo forward passes per input.

\clearpage

\section{More Results on Random-Label Imprints}
\label{app:randomlabel}
Across more settings, increasing label randomness leads to a monotonic decrease in escape noise, 
indicating progressively fragmented local decision regions.
\begin{figure}[h]
    \centering
    \begin{subfigure}[t]{0.35\textwidth}
        \centering
        \includegraphics[width=\linewidth]{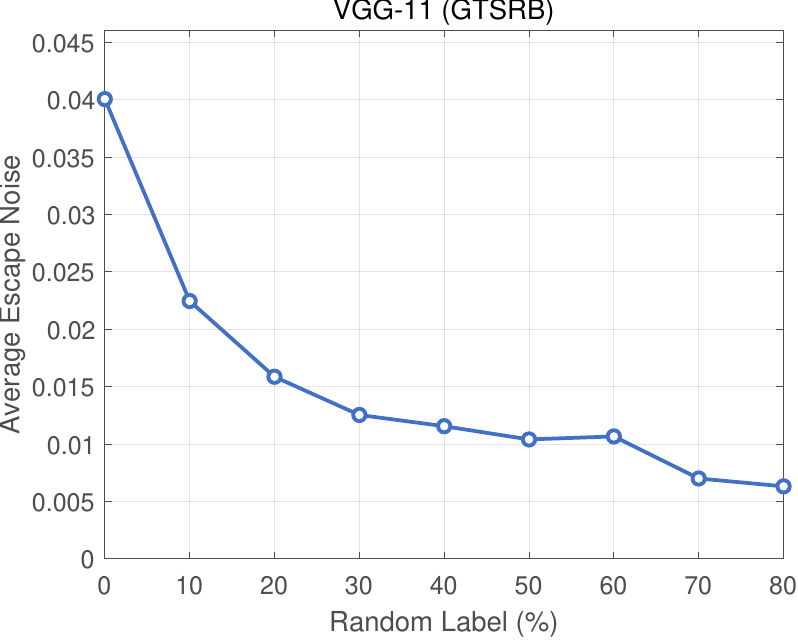}
    \end{subfigure}\hspace{0.04\textwidth}%
    \begin{subfigure}[t]{0.35\textwidth}
        \centering
        \includegraphics[width=\linewidth]{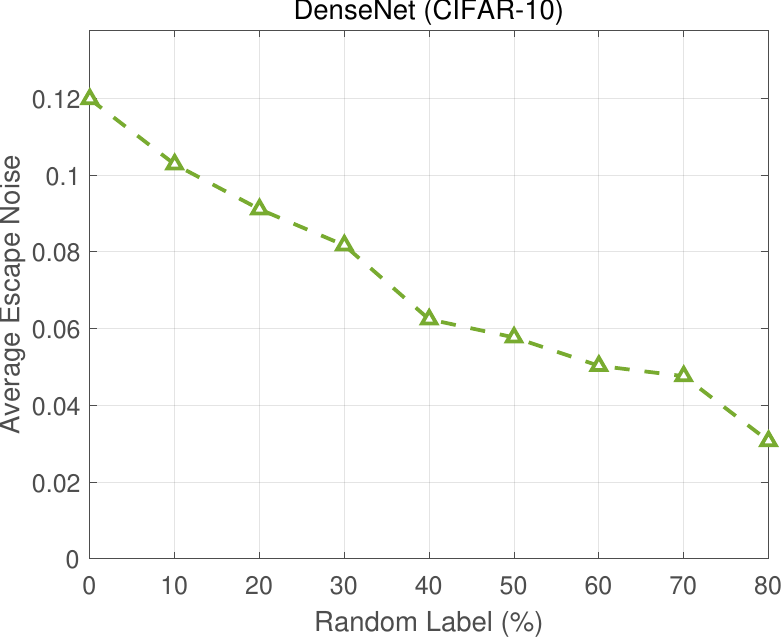}
    \end{subfigure}

    \caption{Average escape noise for models with different percentages of samples randomly labeled across two settings.}
    \label{fig:randomlabel13}
\end{figure}

\section{Random-Label Ablations under Input and Parameter Perturbations}
\label{app:randomlabel_ablation}

We report corresponding ablation experiments under input- and parameter-level perturbations,
using the same datasets, architectures, and random-label settings.

\begin{figure*}[h]
    \centering
    \begin{subfigure}{0.85\linewidth}
        \centering
        \includegraphics[width=\linewidth]{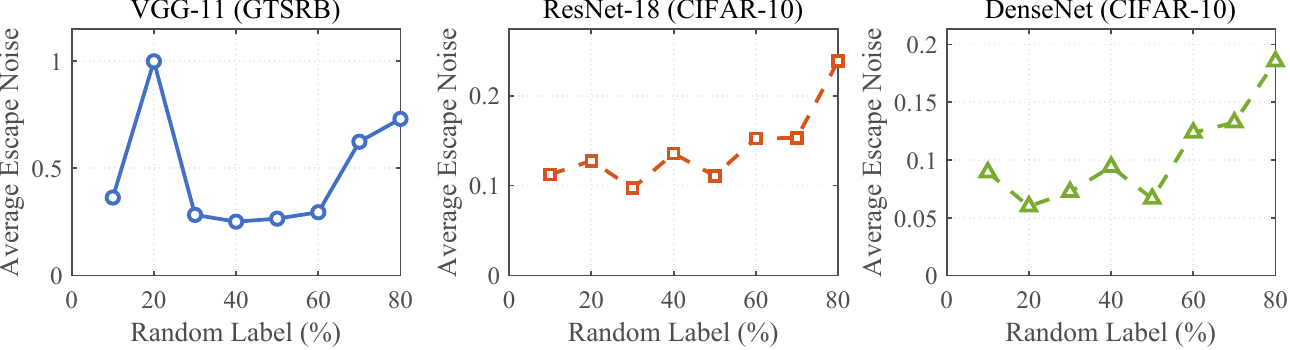}
        \caption{Input perturbation.}
        \label{fig:randomlabel_input}
    \end{subfigure}

    \vspace{0.5em}

    \begin{subfigure}{0.85\linewidth}
        \centering
        \includegraphics[width=\linewidth]{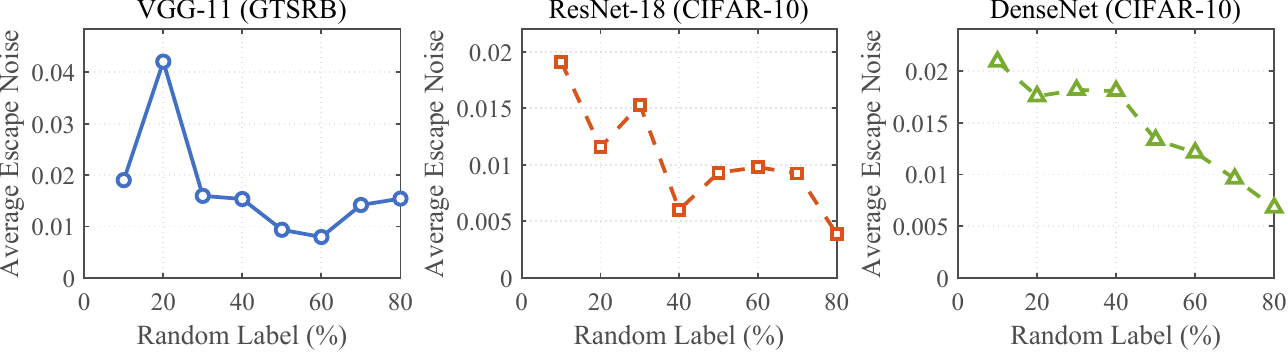}
        \caption{Parameter perturbation.}
        \label{fig:randomlabel_weight}
    \end{subfigure}

    \caption{
    Average escape noise under different random-label ratios for
    \emph{input-} (top) and \emph{parameter-level} (bottom) perturbations.
    Experimental settings follow Figure~\ref{fig:randomlabel}.
    Neither perturbation exhibits a consistent monotonic decrease of escape noise
    as label randomness increases.
    }
    \label{fig:randomlabel_ablation}
\end{figure*}

\clearpage

\section{Implementation Details of Input and Parameter Perturbations}
\label{app:perturb_impl}

We briefly describe the implementation of input- and parameter-level perturbations used in our experiments. To ensure a controlled comparison across perturbation mechanisms, we follow the same non-optimized, inference-time perturbation protocol used by APEX.

\paragraph{Input Perturbation.}
For input perturbation, we add i.i.d. Gaussian noise to the input image at inference time:
\[
x' = \mathrm{clip}(x + \sigma \epsilon, 0, 1), \quad \epsilon \sim \mathcal{N}(0, I),
\]
where $\sigma$ denotes the noise standard deviation.

\paragraph{Parameter Perturbation.}
For parameter perturbation, we add i.i.d. Gaussian noise to all floating-point parameters of the network:
\[
W' = W + \sigma \epsilon, \quad \epsilon \sim \mathcal{N}(0, I),
\]
where the same $\sigma$ is applied uniformly across layers. In all experiments, both perturbation types are non-optimized and applied independently across repeated evaluations.

\clearpage

\section{Additional Results on Perturbation Stability and Sample Regularity}
\label{app:EN_regularity_detail}

In the main text, we focus on activation perturbation as a general framework
for probing sample regularity, and note that input perturbation constitutes
a constrained special case.
Here, we report corresponding input- and activation-level results across
datasets and architectures to corroborate this relationship. The consistency score and memorization score form approximately complementary measures
of sample regularity, satisfying $\text{C-score} + \text{Mem-score} \approx 1$~\cite{C-score}.
Therefore, we only report results in terms of C-score.

Specifically, we evaluate prediction stability under different noise magnitudes $\sigma$
on ImageNet and CIFAR-100, with ResNet-50 and inception model respectively. All stability curves are obtained by averaging Monte Carlo prediction frequencies
within bins of the corresponding regularity scores.

\begin{figure*}[h!]
    \captionsetup{width=\textwidth}
    \centering
    % ------------------- CIFAR-100: activation -------------------
    \begin{subfigure}[b]{0.45\textwidth}
        \centering
        \includegraphics[width=\textwidth]{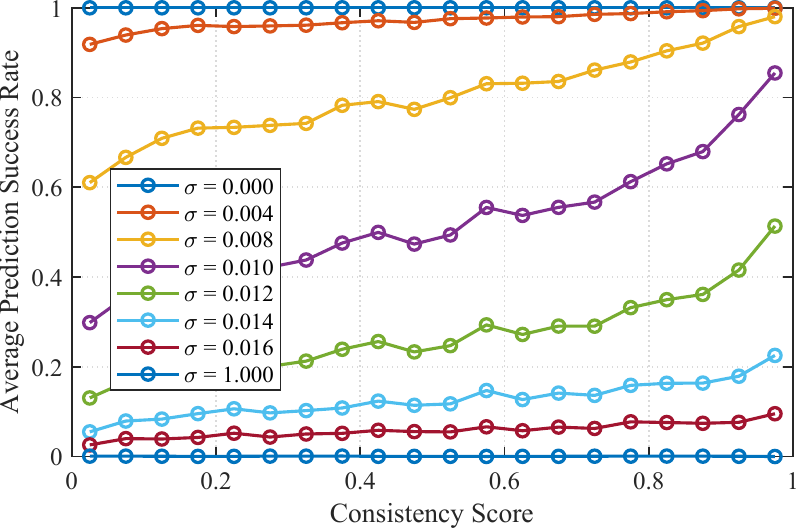}
        \caption{CIFAR-100: activation perturbation stability.}
        \label{fig:appendix_c100_act}
    \end{subfigure}
    \hfill
    % ------------------- ImageNet: activation ------------------
    \begin{subfigure}[b]{0.45\textwidth}
        \centering
        \includegraphics[width=\textwidth]{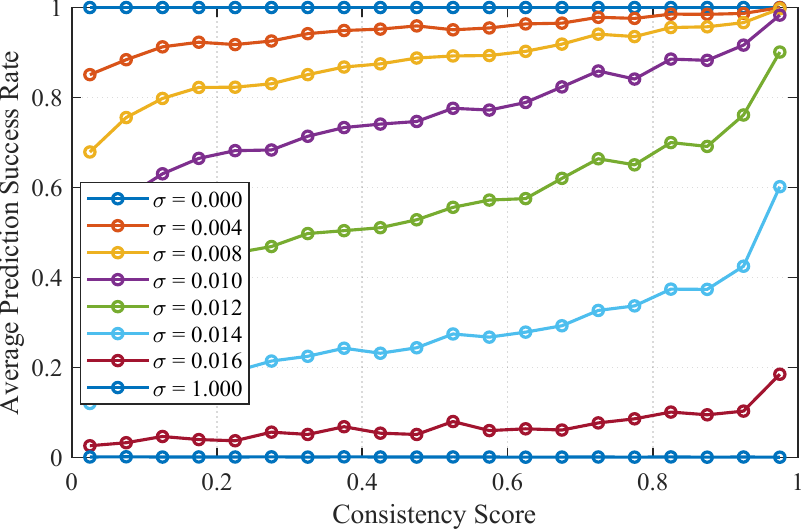}
        \caption{ImageNet: activation perturbation stability.}
        \label{fig:appendix_in_act}
    \end{subfigure}
    
    \vspace{0.4cm}
    
    % ------------------- CIFAR-100: input -------------------
    \begin{subfigure}[b]{0.45\textwidth}
        \centering
        \includegraphics[width=\textwidth]{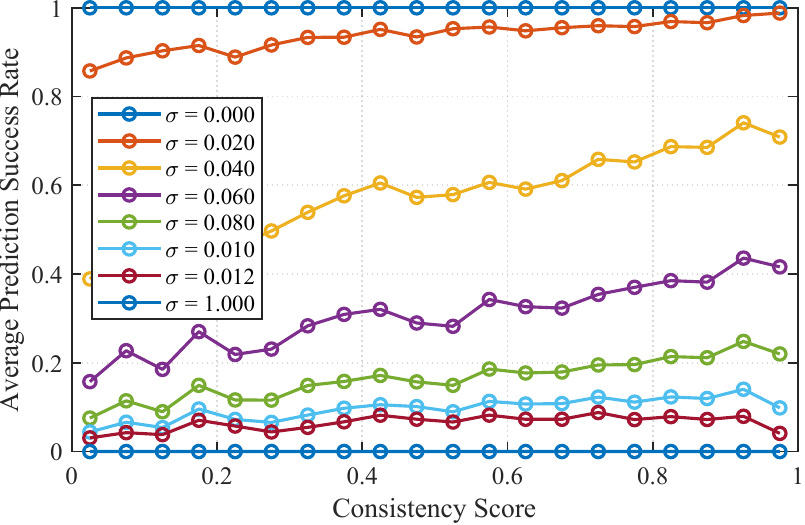}
        \caption{CIFAR-100: input perturbation stability (constrained case).}
        \label{fig:appendix_c100_input}
    \end{subfigure}
    \hfill
    % ------------------- ImageNet: input -------------------
    \begin{subfigure}[b]{0.45\textwidth}
        \centering
        \includegraphics[width=\textwidth]{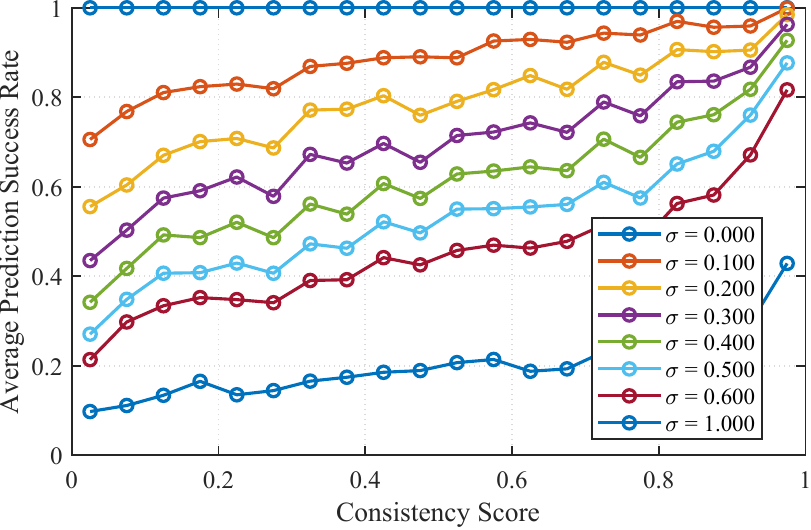}
        \caption{ImageNet: input perturbation stability (constrained case).}
        \label{fig:appendix_in_input}
    \end{subfigure}

    \caption{
    Additional sample-level stability results on ImageNet and CIFAR-100.
    Top: activation perturbation, serving as the general framework.
    Bottom: input perturbation, corresponding to a constrained special case.
    Across datasets, prediction stability exhibits consistent correlations
    with consistency score, supporting the interpretation of escape noise as a reliable indicator of sample regularity.
    }
    \label{fig:appendix_LPD}
\end{figure*}
\clearpage

\section{Model-Level Output Distributions}
\label{app:stationary}

\begin{figure*}[h!]
    \captionsetup{width=\textwidth}
    \centering
    % ------------------- CIFAR-100: activation -------------------
    \begin{subfigure}[b]{0.45\textwidth}
        \centering
        \includegraphics[width=\textwidth]{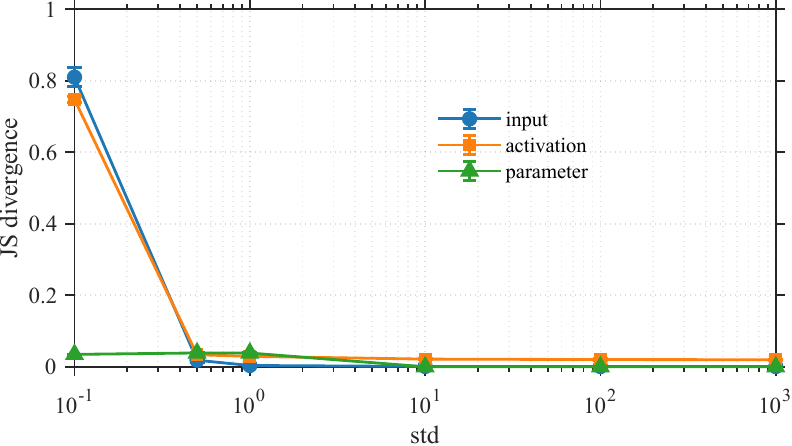}
        \caption{JS divergence between output distributions from different inputs for ResNet-18 models.}
        \label{fig:appendix_c100_act}
    \end{subfigure}
    \hfill
    % ------------------- ImageNet: activation ------------------
    \begin{subfigure}[b]{0.45\textwidth}
        \centering
        \includegraphics[width=\textwidth]{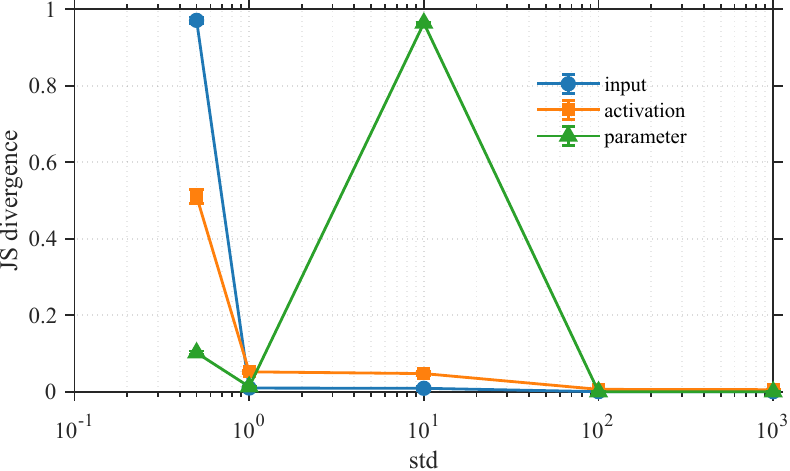}
        \caption{JS divergence between output distributions from consecutive noise magnitudes for ResNet-18 models.}
        \label{fig:appendix_in_act}
    \end{subfigure}
    
    \vspace{0.4cm}
    
    % ------------------- CIFAR-100: input -------------------
    \begin{subfigure}[b]{0.45\textwidth}
        \centering
        \includegraphics[width=\textwidth]{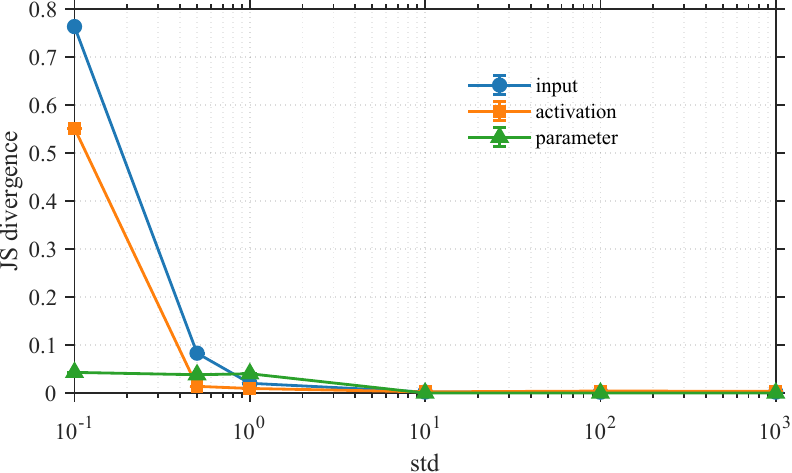}
        \caption{JS divergence between output distributions from different inputs for Inception models.}
        \label{fig:appendix_c100_input}
    \end{subfigure}
    \hfill
    % ------------------- ImageNet: input -------------------
    \begin{subfigure}[b]{0.45\textwidth}
        \centering
        \includegraphics[width=\textwidth]{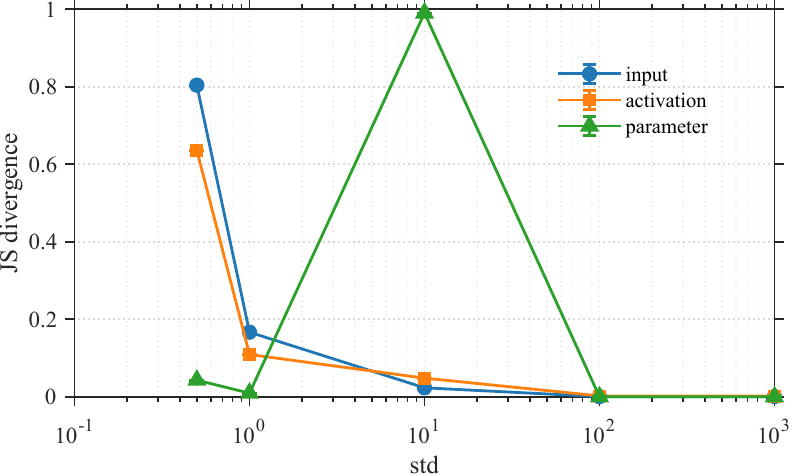}
        \caption{JS divergence between output distributions from consecutive noise magnitudes for Inception models.}
        \label{fig:appendix_in_input}
    \end{subfigure}

    \caption{
    Jensen--Shannon (JS) divergence analysis of output distributions under large perturbations.
(\textbf{Left Column}) Average pairwise JS divergence between output distributions
obtained from different input samples at the same noise magnitude $\sigma$.
(\textbf{Right Column}) JS divergence between the averaged output distributions
at consecutive noise magnitudes.
Experiments are conducted on CIFAR-100 with ten ResNet-18 models (top) and Inception models (bottom), using input-, activation-, and parameter-level perturbations.
As $\sigma$ increases, both metrics rapidly decrease toward zero,
indicating convergence to a sample-independent and stationary output distribution.
While all perturbation types exhibit this convergence,
activation perturbation remains numerically stable and yields smoother transitions,
whereas parameter perturbation becomes unstable under large noise.
    }
    \label{fig:appendix_LPD}
\end{figure*}

\clearpage
\section{Backdoor Attack Configuration}
\label{app:bdsettings}

All backdoor attacks in this paper are implemented using the BackdoorBox toolbox
\cite{backdoorbox}. We adopt the \emph{default configurations} provided by the toolbox for each
attack, including trigger pattern, injection strategy, poisoning rate, and target-class assignment.  
No additional tuning or adjustments are applied. Visualization examples of backdoor triggers are shown below:

\includegraphics[width=\linewidth]{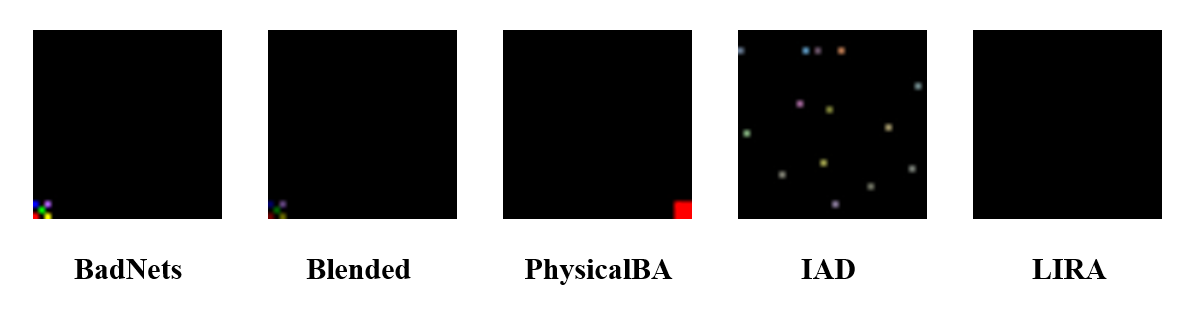}
\captionsetup{width=0.95\linewidth}
\captionof{figure}{An example of backdoor triggers from five different attacks}
\label{fig:trigger}

\section{Transformers: Attenuated Global Collapse}
\label{app:transformer_extension}

Vision transformers are often reported to be more robust to out-of-distribution features,
and backdoor triggers are known to behave similarly to OOD signals.
If this robustness extends to the global representation structure,
backdoored transformers may exhibit a weaker stationary output distribution collapse.
As shown in Figure~\ref{fig:transformer},
backdoored DeiT models initially concentrate on the target class as noise increases,
but partially re-disperse under larger perturbations.
This behavior suggests that, in transformers, poisoned features distort local regions
without fully dominating the global representation space,
highlighting a limitation of collapse-based signals in this architecture.

\begin{figure}[h!]
    \centering
    \includegraphics[width=0.6\linewidth]{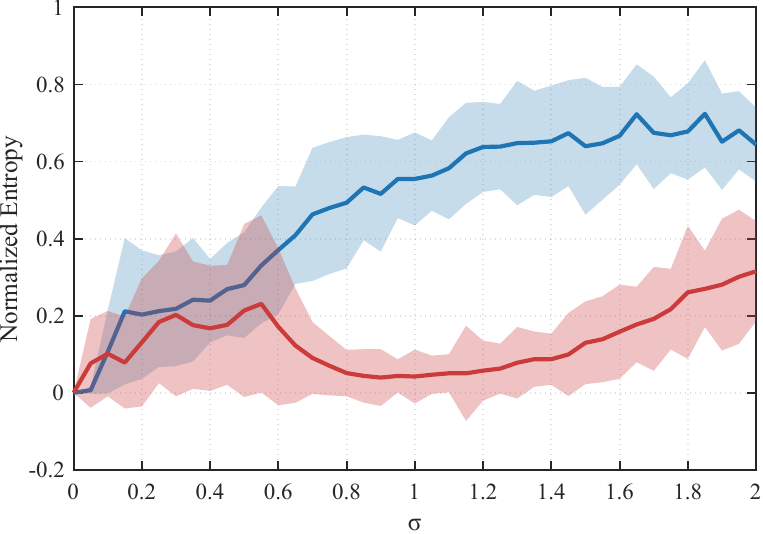}
    \caption{Average normalized entropy of benign and backdoored DeiT-small models under different noise standard deviations.}
    \label{fig:transformer}
\end{figure}
\clearpage
\section{Escape Noise Is Decoupled from Training Confidence}
\label{app:confidence}
This appendix provides a detailed comparison between the evolution of escape noise 
and average softmax confidence over training epochs.
While confidence increases monotonically as training progresses, 
escape noise remains largely stable, indicating that the two quantities capture distinct aspects of model behavior.

\begin{figure}[htb]
\centering
\includegraphics[width=0.6\columnwidth]{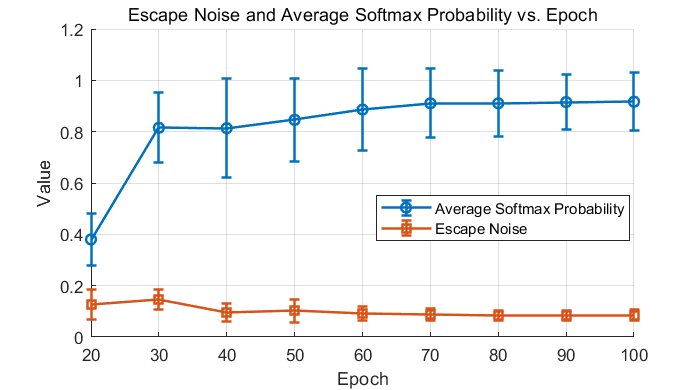}
\caption{Average escape noise and average softmax probability vs. Epoch for 5,000 images sampled from CIFAR-10, with ResNet-18 model.}
\label{fig:ENandConfidence}
\end{figure}

\section{Choice of Noise Type}
\label{app:noisetype}

While activation perturbation primarily uses Gaussian noise, it is not limited to this choice. We compared the KL divergence between output distributions under Gaussian, Laplace, and uniform noise. As shown in Figure~\ref{fig:noisetype}, differences are minimal at high noise levels, suggesting that detection performance is largely determined by noise strength rather than its specific type.

\begin{figure}[h!]
    \centering
    \includegraphics[width=0.6\linewidth]{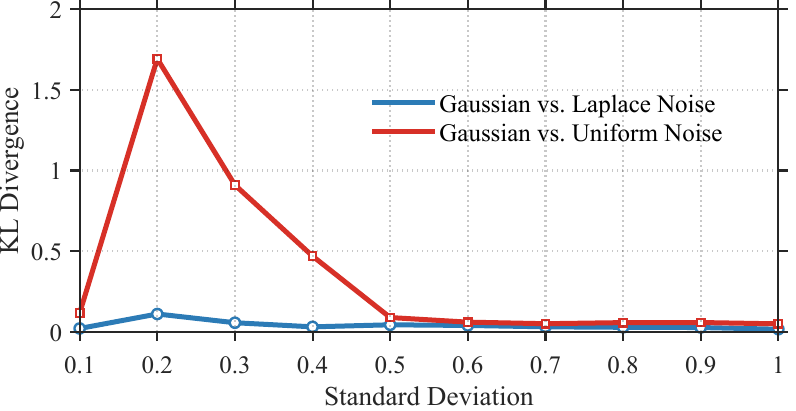}
    \caption{Average KL divergence between output distributions under Gaussian, Laplace, and uniform noise, averaged across ten ResNet-18 CIFAR-10 models.}
    \label{fig:noisetype}
\end{figure}

\clearpage
\section{Single-Layer Activation Perturbation} 
\label{app:singlayer}

Figure~\ref{fig:layerwise} shows that perturbing deeper layers results in higher output dispersion, indicating greater sensitivity in deeper layers. In backdoored models, the output is still concentrated on the target class throughout all layers.

\begin{figure}[h!]
\centering
\includegraphics[width=0.6\linewidth]{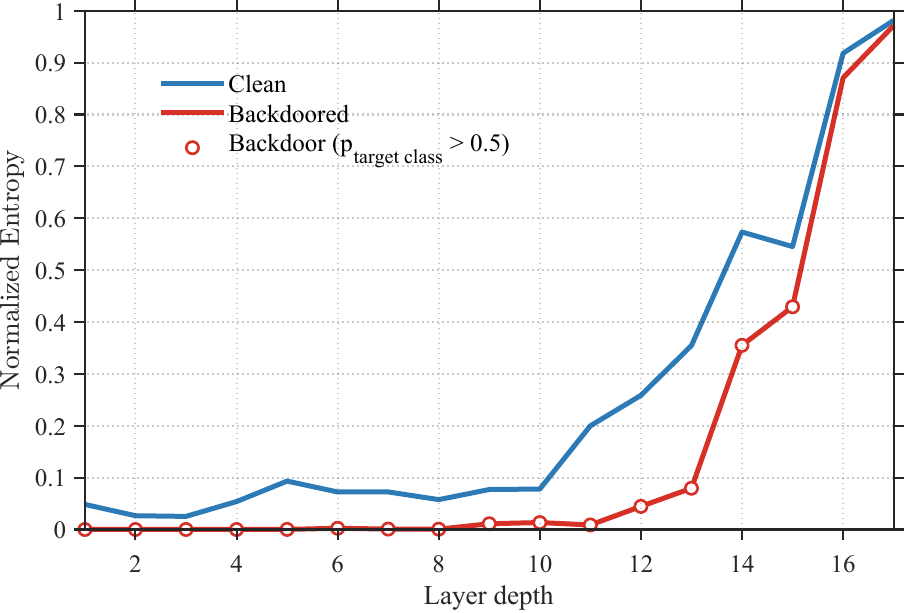}
\caption{
Average normalized entropy when injecting noise into a single ReLU layer for ten ResNet-18 models, CIFAR-100.
}
\label{fig:layerwise}
\end{figure}

\section{Activation Function Choice}
\label{app:activationchoice}

To examine whether the stationary target-aligned behavior depends on the specific choice of activation function,
we repeat the backdoor experiments using models trained with alternative nonlinearities on CIFAR-10.
In addition to ReLU, we consider Leaky ReLU and GELU, which differ substantially in smoothness and negative-region behavior.

For each model, we apply activation perturbation at inference time and measure the average probability
assigned to the backdoor target class across Monte Carlo runs under increasing noise strength.
Figure~\ref{fig:activation_function_ablation} reports the resulting target-class probabilities as a function of the noise standard deviation.

\begin{figure}[h]
    \centering
    \includegraphics[width=0.6\linewidth]{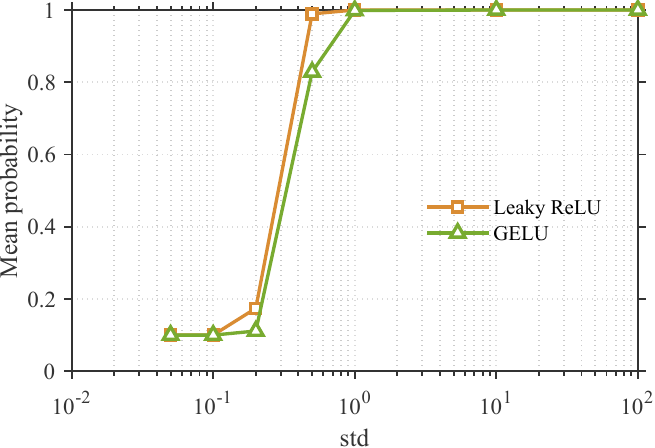}
    \caption{
    Average predicted probability of the backdoor target class under increasing activation noise
    for models trained with different activation functions. Both Leaky ReLU and GELU exhibit a sharp transition toward near-deterministic prediction of the target class,
    similar to ReLU-based models. This indicates that the observed stationary collapse is not specific to ReLU, but reflects a general property of activation perturbation.
    }
    \label{fig:activation_function_ablation}
\end{figure}

\end{document}